\newcommand{\set}[1]{\left\{#1\right\}}
\newcommand{\brac}[1]{\left [#1\right ]}
\newcommand{\Real}{\mathbb R}
\newcommand{\too}{\rightarrow}
\newtheorem*{rep@theorem}{\rep@title}
\newcommand{\newreptheorem}[2]{%
\newenvironment{rep#1}[1]{%
 \def\rep@title{#2 \ref{##1}}%
 \begin{rep@theorem}}%
 {\end{rep@theorem}}}
\newtheorem{theorem}{Theorem}
\newtheorem{definition}{Definition}
\newcommand{\subalign}[1]{%
  \vcenter{%
    \Let@ \restore@math@cr \default@tag
    \baselineskip\fontdimen10 \scriptfont\tw@
    \advance\baselineskip\fontdimen12 \scriptfont\tw@
    \lineskip\thr@@\fontdimen8 \scriptfont\thr@@
    \lineskiplimit\lineskip
    \ialign{\hfil$\m@th\scriptstyle##$&$\m@th\scriptstyle{}##$\crcr
      #1\crcr
    }%
  }
}
\newcommand*\samethanks[1][\value{footnote}]{\footnotemark[#1]}
\def\Eqref#1{Equation~\ref{#1}}
\def\Algref#1{Algorithm~\ref{#1}}
\def\1{\bm{1}}
\DeclareMathAlphabet{\mathsfit}{\encodingdefault}{\sfdefault}{m}{sl}
\SetMathAlphabet{\mathsfit}{bold}{\encodingdefault}{\sfdefault}{bx}{n}
\newtheorem{claim}{Claim}
\begin{document}
	
	\title{Surface Networks via General Covers}
	
	\author{Niv Haim \thanks{Equal contribution}\\
		\and
		Nimrod Segol \samethanks \\
		\and 
		Heli Ben-Hamu \\
		\and 
		Haggai Maron \\
		\and
		Yaron Lipman \\
		\\
		Weizmann Institute of Science\\
		Rehovot, Israel\\
	}

	\maketitle

	\begin{abstract}
		Developing deep learning techniques for geometric data is an active and fruitful research area. This paper tackles the problem of sphere-type surface learning by developing a novel surface-to-image representation. Using this representation we are able to quickly adapt successful CNN models to the surface setting. 
		
		The surface-image representation is based on a covering map from the image domain to the surface. Namely, the map wraps around the surface several times, making sure that every part of the surface is well represented in the image. Differently from previous surface-to-image representations, we provide a low distortion coverage of all surface parts in a single image. Specifically, for the use case of learning spherical signals, our representation provides a low distortion alternative to several popular spherical parameterizations used in deep learning.
		
		We have used the surface-to-image representation to apply standard CNN architectures to 3D models including  spherical signals. We show that our method achieves state of the art or comparable results on the tasks of shape retrieval, shape classification and semantic shape segmentation.
		
	\end{abstract}
	

	\vspace{-17pt}
	\section{Introduction}
	Adapting deep learning methods to geometric data (\eg, shapes) is a vibrant research area that has already produced state of the art algorithms for several geometric learning tasks (\eg,  \cite{qi2017pointnet,qi2017pointnet++,su2015multi}). 
	
	Two prominent approaches are: (i) mapping the geometric data to tensors (\eg, images) and using off-the-shelf convolutional neural network (CNN) architectures and optimization techniques \cite{su2015multi,Wu_2015_CVPR,sinha2016deep,maron2017convolutional}; and (ii) developing novel architectures and optimization techniques that are tailored to the geometric data \cite{masci2015geodesic,qi2017pointnet,qi2017pointnet++}. 
	An important benefit of (i) is in reducing the geometric learning task to an image learning one, allowing to harness the \emph{huge} algorithmic progress of neural networks for images directly to geometric data. 
	
	Some previous attempts, following (i), to perform learning tasks on geometric data use projections to 2D planes, \eg, by rendering the shapes \cite{su2015multi}. Such projections are not injective and suffer from occlusions, thus often require a collection of projections for a single shape. Other methods embed the shape in an encapsulating 3D grid \cite{Wu_2015_CVPR,maturana2015voxnet}; these methods require dealing with higher dimensional tensors and are usually less robust to deformations. Other methods \cite{sinha2016deep,maron2017convolutional} try to find low distortion 2D mappings to an image domain. In this case the intrinsic dimensionality of the data is preserved, however, these maps suffer from high distortion and/or ignore the difference in the topologies of the surface (no boundary) and the image (with boundary).
	
	In this paper, we advocate a novel 2D mapping method for representing sphere-type (genus zero, \eg, the human model in Figure \ref{fig:E0}a, left) surfaces as images. The challenge in using an image to represent a surface has two aspects: geometrical and topological. Geometrically, a general curved surface cannot be mapped to a flat domain (\ie, the image) without introducing a significant distortion. Topologically, an image has a boundary while sphere-type surfaces do not; hence, any mapping between the two will introduce cuts and discontinuities. Furthermore, a naive application of 2D convolution to the image would be ambiguous on the surface (see Figure \ref{fig:conv_on_flat_spheres} and Subsection \ref{ss:conv_on_flat_spheres}). 
	
	To address these challenges we think of the image as a periodic domain (\ie, a torus) and relax the notion of a one-to-one mapping to that of a \emph{covering map} from the image domain onto the surface. That is, we construct a mapping from the image domain to the surface that covers the surface several times. For example, Figure \ref{fig:E0}a visualizes a degree-$5$ covering map. Meaning, the surface appears $5$ times in the image; note how each part of the surface appears with low distortion at-least once in the image. The image generated by our covering map is periodic, namely its left and right boundaries as well as its bottom and top boundaries correspond, making the image boundaryless. Importantly, since image convolution is well defined on a torus, it will translate to a continuous convolution-like operator on the surface \cite{maron2017convolutional}.

    Applying our method to surface learning is easy: use a covering map to transfer functions of interest over the input surfaces (\eg, the coordinate functions) to images and apply one's favorite CNN with periodic padding. 
		
	We tested our method in two scenarios: spherical signal learning \cite{coors2018spherenet,cohen2018spherical}, and surface collection learning. For spherical signal learning, our approach provided state of the art results among all spherical methods on a shape retrieval dataset (SHREC17 \cite{savvashrec}) and a shape classification dataset (ModelNet40 \cite{Wu_2015_CVPR}). For surface collection learning, our method produced state of the art results on a surface segmentation dataset (Humans \cite{maron2017convolutional}). 
	Our contributions are:
	\begin{itemize}\vspace{-6pt}
		\item We introduce a broad family of low distortion surface-to-toric image representations. The toric image representation allows applying off-the-shelf CNNs to general genus-zero surfaces. \vspace{-3pt}
		
		\item In particular, we provide a framework for learning spherical signals using CNNs. \vspace{-6pt}
		
		\item We introduce a practical algorithm for computing toric covers of genus zero surfaces. \vspace{-4pt}
	\end{itemize}

	Our code is available at \url{https://github.com/nivha/surface_networks_covers}

	\section{Previous work}	\vspace{-3pt}

		\begin{figure}[t]
		\begin{tabular}{@{\hskip3pt}c@{\hskip3pt}}
			\includegraphics[width=\linewidth]{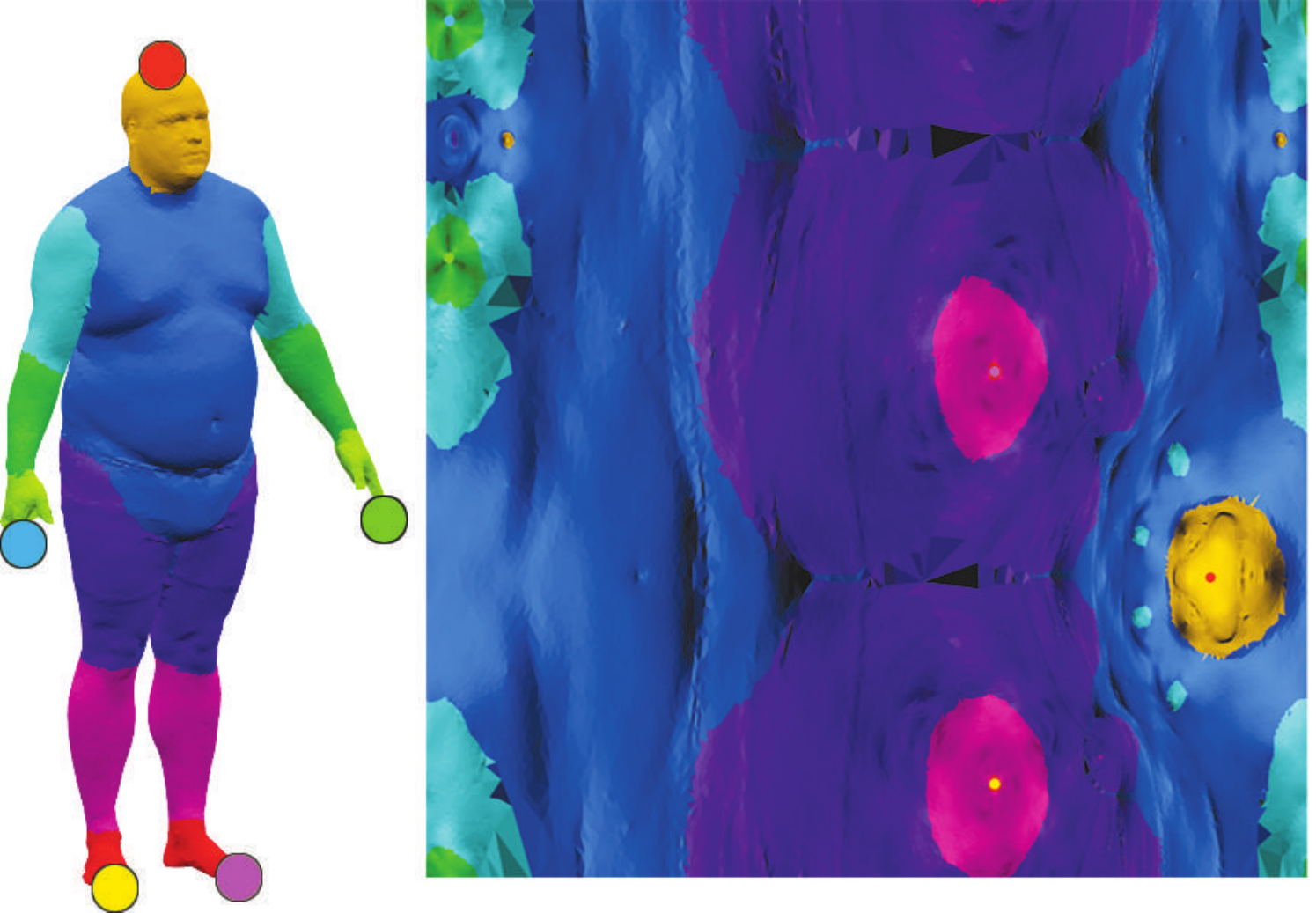} \vspace{-5pt} \\ (a)\vspace{5pt} \\ 
			\includegraphics[width=\linewidth]{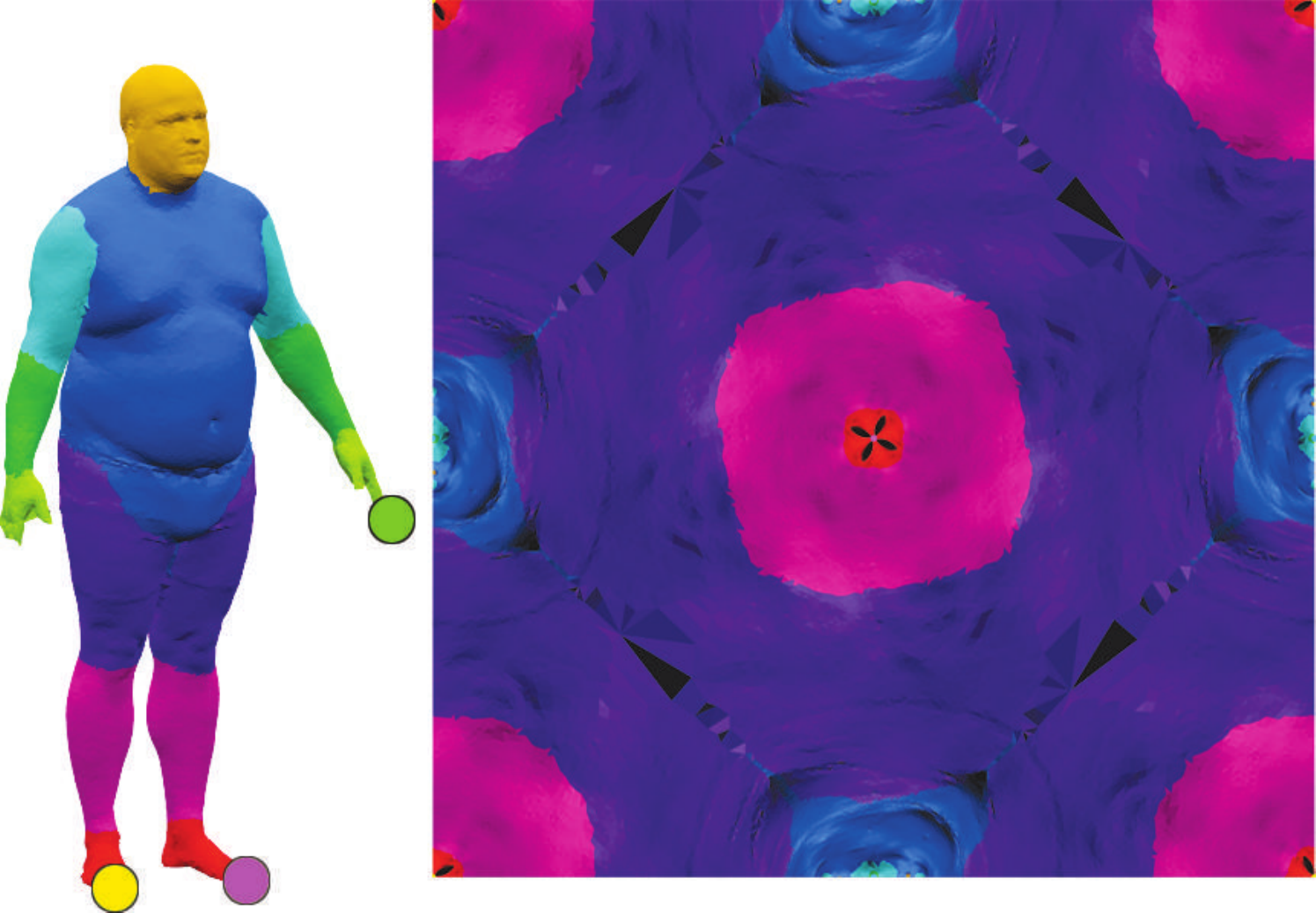} \vspace{-5pt}\\
			(b)  \vspace{1pt}\\
		\end{tabular}
		\caption{(a) A periodic general cover with $k=5$ branch points (colored dots) and degree $d=5$ of a human model, computed using our method. (b) The orbifold cover of \cite{aigerman2015orbifold} with $k=3$ branch points and degree $d=4$. Note that the cover in (b) has four (rotated) repetitions of the same mapping; is missing the head and right arm; and the torso (blue) suffers from a considerable down-scale. \vspace{-10pt}}\label{fig:E0}
	\end{figure}
	
	Applying deep learning techniques to geometric data has proved to be a huge success in the last few years.  A wide variety of methods were suggested, where the most popular approaches are: volumetric based methods (\eg, \cite{Wu_2015_CVPR,maturana2015voxnet}), rendering based methods (\eg, \cite{su2015multi,wei2016dense,yu2018multi}), spectral based methods (\eg, \cite{bruna2013spectral,defferrard2016convolutional}) and methods that operate directly on the surface itself (\eg, \cite{masci2015geodesic}). A popular related problem is the problem of learning on point clouds which received a lot of attention lately (see \eg, \cite{qi2017pointnet,atzmon2018point,li2018pointcnn}).
	
	Here, we restrict our attention to intrinsic or parameterization-based surface methods and refer the reader to the above mentioned works and a recent survey \cite{bronstein2017geometric} for further information. 
	
	\textbf{Local parameterization.} Such methods (\eg,  \cite{masci2015geodesic,boscaini2016learning,monti2017geometric}) extract local surface patches and use them in order to learn point representations. In \cite{masci2015geodesic} the authors use local polar coordinates as the patch operator. In a follow-up work,  \cite{boscaini2016learning} use projections on oriented anisotropic diffusion kernels, where \cite{monti2017geometric} learn the patch operator using a Gaussian mixture model. In contrast to these works, we employ a global parameterization which represents the shape using a single image.
	
	\textbf{Global parameterization.} Other methods use global parametrization of the surface to a canonical domain. \cite{sinha2016deep,sinha2017surfnet} use an area-preserving parameterization and map surfaces to a planar domain (going through a sphere); the global area-preserving parameterization cannot cover the surface with low distortion everywhere and depends on the specific cut made on the surface. 
	
	The most similar method to ours is \cite{maron2017convolutional} that proposes gluing four copies of the surface into a torus and map it conformally (\ie, preserving angles) to a flat torus, where the convolution is well defined. Their map is defined by a choice of three points on the surface, and suffers from significant angle and scale distortion, see Figure \ref{fig:E0}b (\eg, the head, right arm and torso). In order to cover each point on the surface reasonably well, the authors sample multiple triplets of points from each surface where each triplet focuses on a different part of the surface. In a follow up work, \cite{hamu2018multi} use the same parameterization as a surface representation for Generative Adversarial Networks (GANs) \cite{goodfellow2014generative}. In order to deal with the high distortion of each single parameterization, the authors devise a multi-chart structure and rely on given sparse correspondences between the surfaces.

	\textbf{Convolutions on tangent planes.} \cite{tatarchenko2018tangent} define convolutions on surfaces by working on the tangent planes. \cite{pan2018convolutional} also define the convolutions on tangent planes and relate convolutions on nearby points using parallel transport. \cite{poulenard2018multi} define convolutions on surfaces by extending the notion of a signal on a surface into a directional signal and build layers that are equivariant to the choice of reference directions. \cite{huang2019texturenet} utilizes 4-rotational symmetric field to define a domain for convolution on a surface.
	
	\textbf{Convolutions of spherical signals.} Our work targets learning of general genus zero surfaces. In particular, it can facilitate learning of spherical signals, a task that has received growing interest in the last few years. \cite{su2017learning, coors2018spherenet, zhao2018distortion} note that an equirectangular projection of a spherical signal suffers from large distortions and suggest network architectures that try to compensate for these distortions. \cite{cao20173d} perform 2D convolution on spherical strips extracted from the spherical signal. 
	\cite{jiang2019spherical} suggest to define the convolution of a spherical signal as a linear combination of differential operators with learned weights.   
	In a different line of work, \cite{cohen2018spherical,esteves2018learning,kondor2018clebsch} propose networks that are invariant to the natural action of $SO(3)$ on spherical signals. \cite{cohen2019gauge} advocate the notion of gauge equivariance as the correct equivariance notion on manifolds, and construct gauge equivariant networks on spheres. 
	
	\textbf{Other methods.}	\cite{xu2017directionally} tackle the shape segmentation problem by a novel architecture that operates on local features (such as normals) and global features (such as distances) and then fuses them together. \cite{kostrikov2017surface} propose an improved graph neural network model based on the Dirac operator.

	\vspace{-8pt}
	
	\section{Preliminaries} \vspace{-6pt}
	In this section we discuss our choice of periodic images (\ie images with toric topology) and introduce branched covering maps, the main mathematical tool used in our approach. 
	
	\begin{figure}
	    \centering
	    \begin{tabular}{ccc}
	        \includegraphics[width=0.3\columnwidth]{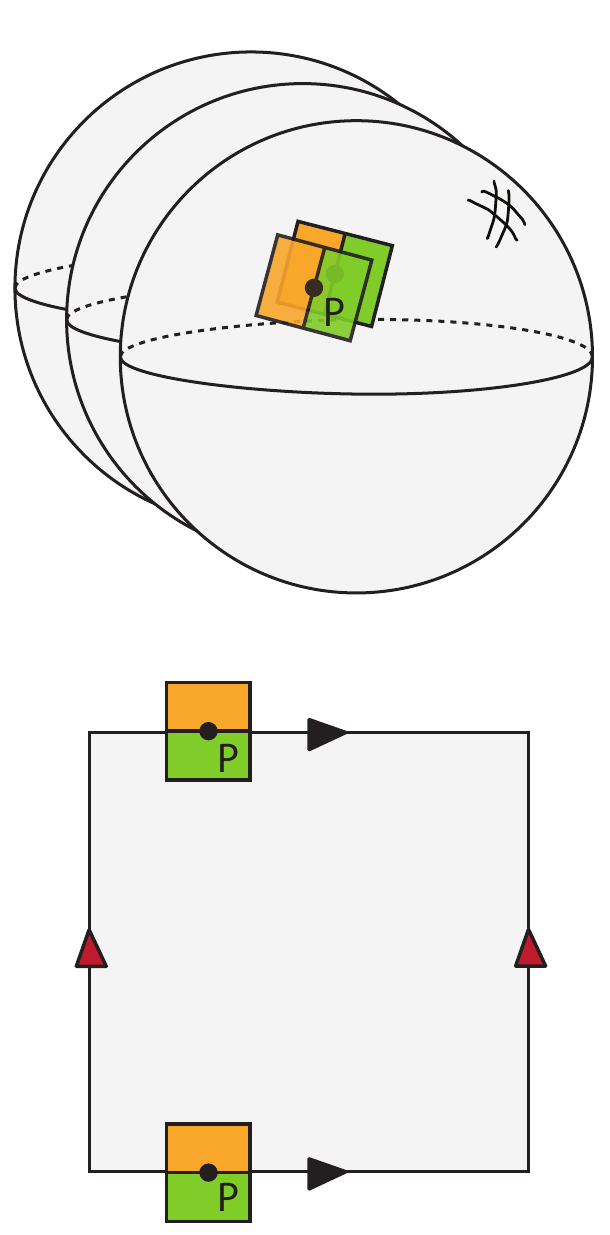} &
	        \includegraphics[width=0.3\columnwidth]{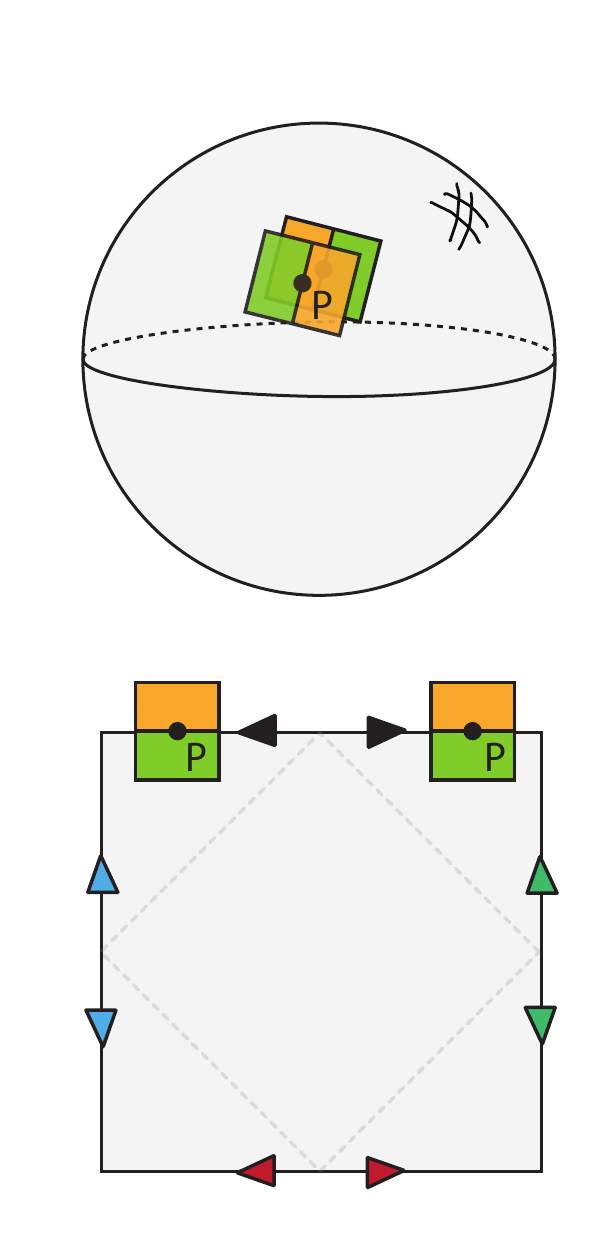}&
	        \includegraphics[width=0.3\columnwidth]{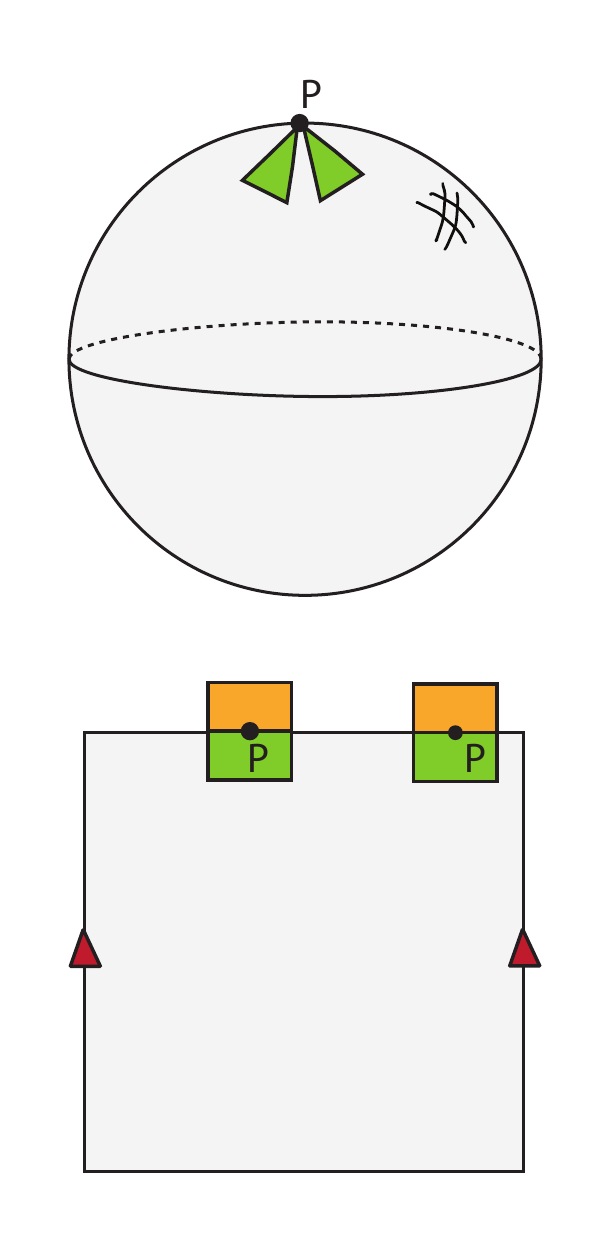} \\
	        Torus & Octahedron & Equirectangular \\ \hspace{3pt}
	    \end{tabular}
	    \caption{Standard 2D convolution applied to image representations (bottom row) of spherical topology (top row). In each example, the points indicated by $\mathrm{P}$ represent the same point on the sphere. Only for the toric topology the convolution in the image domain defines a consistent, continuous operator everywhere on the sphere. } \vspace{-10pt}
	    \label{fig:conv_on_flat_spheres}
	\end{figure}
	
	\subsection{Convolutions on flattened spheres}\vspace{-4pt}
	\label{ss:conv_on_flat_spheres}
	A standard way to apply CNNs to a signal on a sphere-type surface is to represent it as an image and apply standard 2D convolution. Since representing a sphere as an image requires cutting and duplicating the cuts, different boundary segments in the image represent the same segment on the sphere. 
	
	In the case where the transformation in the image domain between the two duplicated boundary segments is a pure translation then the result of applying 2D convolution at any two matching points on these segments will result in exactly the same value. In other cases, such as equirectangular spherical projection \cite{su2017learning} or octahedron spherical projection \cite{praun2003spherical,sinha2016deep}, 2D convolution on two matching points result in two different values. Figure \ref{fig:conv_on_flat_spheres} shows an example where duplicated image boundary segments are marked with the same color arrows; a pair of matching points (marked $\mathrm{P}$) are shown in each example along with an illustration of a convolution kernel. Note that only in the toric topology the kernel is consistent at the duplicated points. A similar point of view for toric images was suggested in \cite{maron2017convolutional}. We extend it to a more general family for toric images of sphere-type surfaces.

	\vspace{-3pt}
	\subsection{Branched covering maps}
	This section provides a brief introduction to branched covering maps (for more details see \cite{Hatcher:478079}). We start with a formal definition:
	
	\begin{definition}
	Let $X$ and $Y$ be topological spaces. A map $E : X \to Y$ is a \textbf{branched covering map} if every point $y \in Y$ except for a finite set of points $\{b_1, \ldots ,b_k\}$ has a neighborhood $U \subseteq Y$, such that $E^{-1}(U)$ is a disjoint union of homeomorphic \footnote{A homeomorphism is continuous map with a continuous inverse.} copies of $U$.
		
	The set of points $\{b_1, \ldots ,b_k\}$ are called \textbf{branch points}.
	\end{definition}
	
	A simple example for a branched covering map is ${E(z) = z^d}$, for $X=Y=\mathbb{C}$, and for some integer $d$. The function $E$ has one branch point at $b_1=0$. Every point $y\in Y \setminus \left \{0\right \}$, has $d$ \textit{distinct} pre-images ${E^{-1}(y) = \{x_1, \ldots, x_d\} \subset X}$. However, the point $y=0$ has a single pre-image $E^{-1}(y) =\{0\} $. We say that the point $y=0$ has $d$ pre-images located at $0$, or that $0$ is a pre-image with multiplicity $d$. The \textbf{ramification index} of $x$ over $E(x)$ is the multiplicity at $x$, namely $1$ for all $x\in E^{-1} ( Y \setminus \set{0})$ and $d$ for $x=E^{-1}(b_1)=0$. We denote it as $r(x|E(x))$. Figure \ref{fig:branched}b shows this example for $d=4$. In fact, this example captures all the local behaviors of covering maps: around a point $x \in X$ with $r(x|E(x)) = r$ the map $E$ looks like the map $z \mapsto z^r$.
	
	\begin{figure}[t!]
	\begin{tabular}{cc}
		\includegraphics[width=0.55\columnwidth]{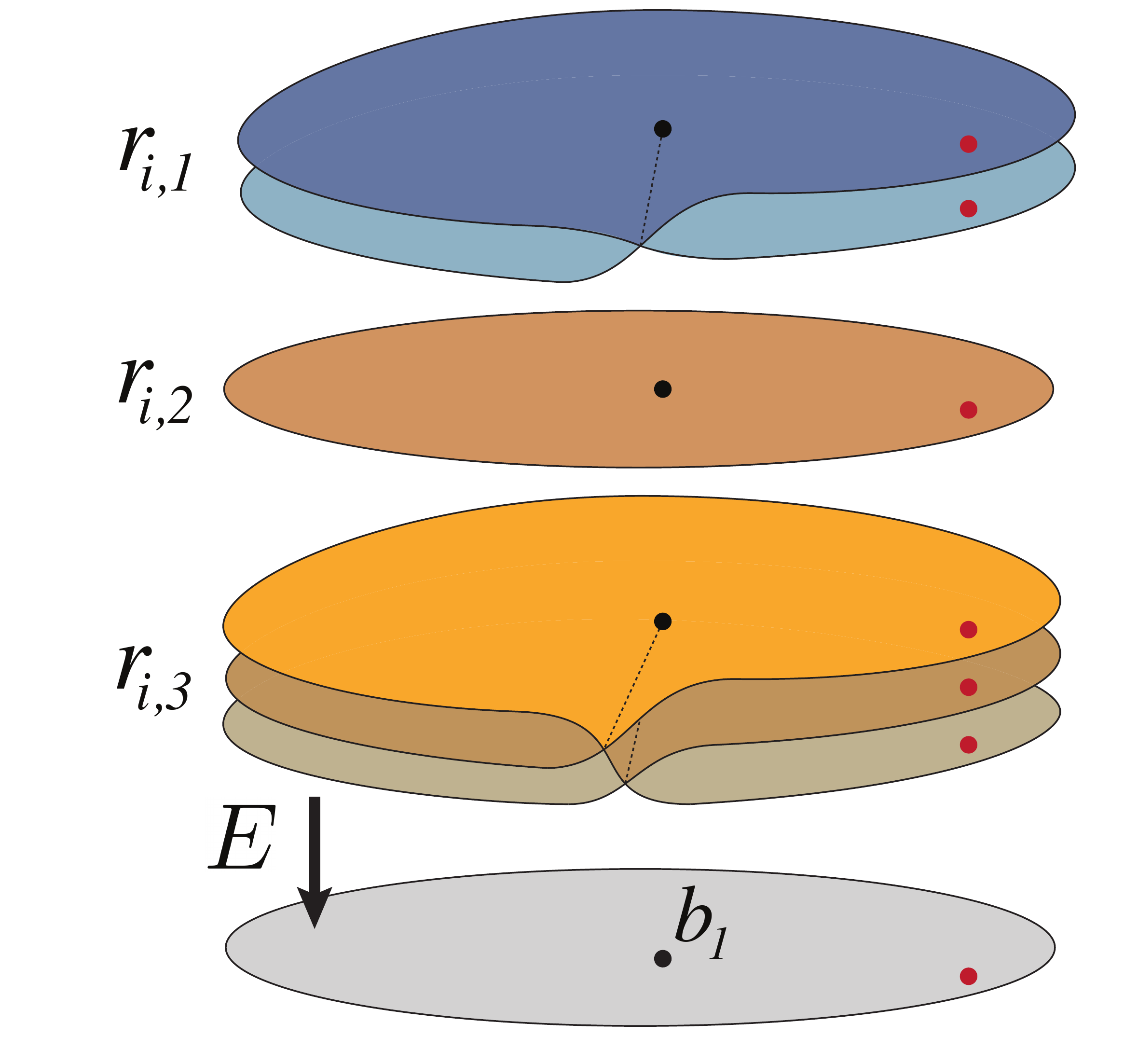} &
		\includegraphics[width=0.33\columnwidth]{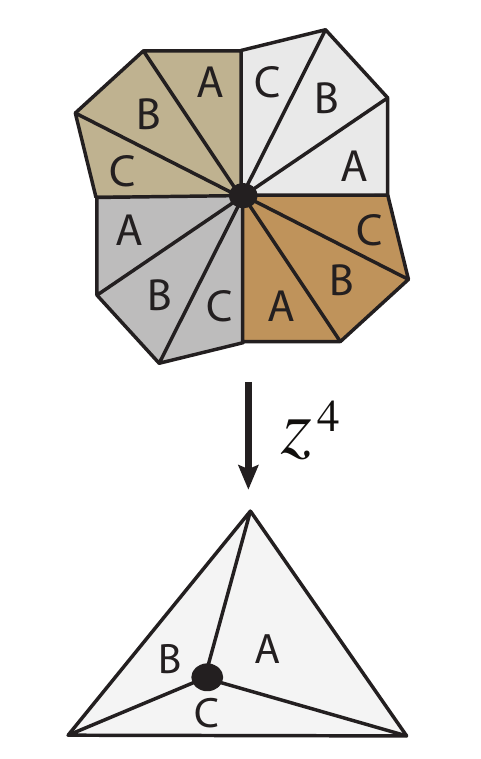}\\
		(a) & (b)	
	\end{tabular}
	\caption{(a) A branch point $b_1$ (in black) with ramification structure $\rho_1=[2, 1, 3]$. Note that all other points, such as the marked red point, have six distinct pre-images. (b) Shows an example of a branched covering map with degree $4$ on a triangular mesh; The central point has a single pre-image (above it, also in black).  The map looks like the map $z \mapsto z^4$. Each color represents a different copy of the neighborhood of the branch point. A triangle and its pre-images are marked with the same letter. } \vspace{-15pt}
	\label{fig:branched} 
	\end{figure}
	
	Let us give another example: Consider the function ${E(z) = z^2(z-7)}$. It has a branch point at $y=0$ with \textit{two} distinct pre-images. Namely, $E^{-1}(0) = \{0, 7\}$. Here, the ramification index of $0$ over $E(0)$ is $2$ and the ramification index of $7$ over $E(7)$ is $1$. We say that the \textit{ramification structure} of $0$ is $[2,1]$, formally:
	
	\begin{definition}
		Let $E:X\to Y $ be a branched covering map, $b_i$ a branch point and $l_i=|E^{-1}(b_i)|$, the number of pre-images of $b_i$. The \textbf{ramification structure} of $b_i$ is the multi-set of ramification indices of its pre-images, denoted by $\rho_i = [r_{i,1}, \ldots r_{i,l_i}]$. The \textbf{ramification type} of $E$ is the collection of its ramification structures, $\rho = [\rho_1, \ldots , \rho_k]$.
	\end{definition}
	
	Figure \ref{fig:branched}a depicts a branch point $b_1$ with three distinct pre-images, $l_1=3$, and ramification structure $\rho_{1} = [2, 1, 3]$. Note that the ramification structure of a non-branch point is a trivial multi-set of ones: $[1\dots 1]$, see \eg, the red dot in Figure \ref{fig:branched}a. 
	
	The sum of the ramification indices of any point in $X$ is independent of the choice of the point (see \cite{donaldson2011riemann}, page 44 Proposition 7), namely 
	\begin{equation}
	\sum_{j=1}^{l_i}r_{i,j}=d. 
	\label{eq:degree}
	\end{equation}
	Lastly, $d$ is called the \textbf{degree of the covering}. Intuitively, the degree of the covering counts how many times $X$ covers $Y$, or alternatively how many copies of $Y$ can be found in $X$.

 \vspace{-4pt}
	\subsection{Riemann-Hurwitz formula}
	
	A key fact about ramification types of branched covering maps between (boundaryless) surfaces is the \textbf{Riemann-Hurwitz formula} (RH), which connects the genus (\ie, number of handles) of the surfaces with the ramification type. In our case, we map a torus to a sphere-type surface and get the corresponding RH formula:
	\begin{equation} \label{eq:RH}
	\sum_{i=1}^{k}\sum_{j=1}^{l_i}(r_{i,j} - 1) = 2d	
	\end{equation}
	A quick derivation of this formula is given in Section~\ref{sec:rh_formula}.

	Therefore, the RH formula sets a necessary condition on the possible ramification types $\rho$ of such branched covering maps. For example, the ramification type $\brac{[2],[2],[2],[2]}$ satisfies the RH equations but the ramification type $\brac{[2],[2]}$ does not (in this case $d=2$, $k=2$, $l_i=1$), implying that there is no covering map with this ramification type. We note that Equations \eqref{eq:degree} and \eqref{eq:RH} are necessary but not sufficient conditions.

	\section{Approach}
	Our goal is transferring signals (\ie, functions) from a sphere-type  surface $M$ to the image domain $I$ (\ie, the flat torus: unit square $[0,1]^2$ with opposite ends identified). This is done by constructing a branched covering map 	\begin{equation}
	E: I\to M
	\label{eq:cover_from_flat_torus}
	\end{equation} 
	and pulling back the signals to the image using $E$. That is, given a signal $f:M\too\Real^n$ that we want to transfer, the value of a pixel $p\in I$ is set to $f(E(p))$. We represent the surface $M$ using a triangular mesh. 
	
	\begin{figure}
  \begin{center}
    \includegraphics[width=0.48\textwidth]{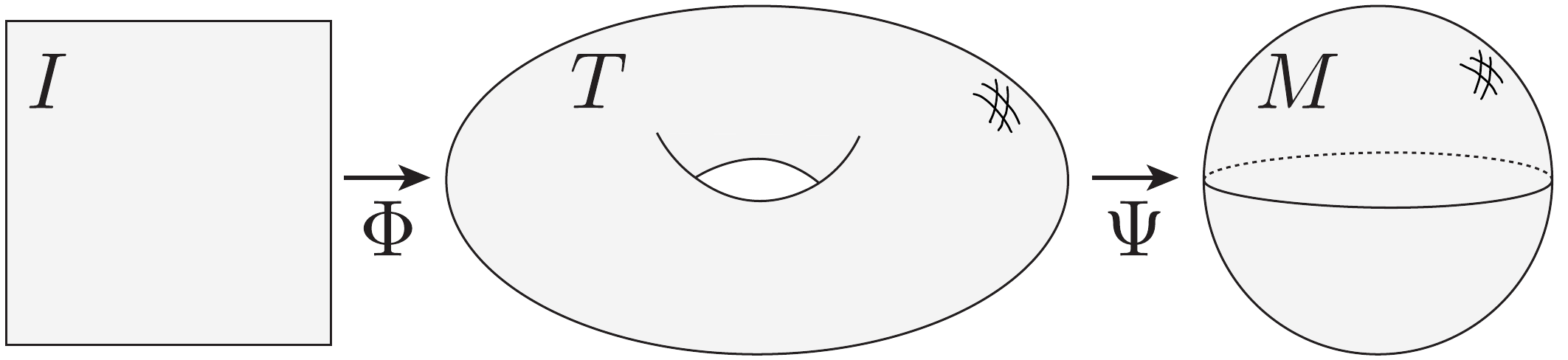}
  \end{center}
  \caption{Construction of the covering map $E:I\too M$.}\label{fig:E_construction}
\end{figure}
	We build the covering map $E:I\too M$ in two steps, as a composition of two functions:
	\begin{align*}
	E &=   \Psi \circ \Phi   \\
	\Phi&: I \to T  \\
	\Psi&: T \to M.
	\end{align*}
    where $T$ is a torus-type surface built out of $d$ copies of $M$, $\Psi$ is a branched covering map, and $\Phi$ is a homeomorphism between the two tori $I$ and $T$ (see Figure~\ref{fig:E_construction} for illustration).

	\subsection{Computing the branched covering map $\Psi$}
	
	In this section we describe how we construct the mesh $T$ out of the mesh $M$ and the branched covering map $\Psi:T \to M$. The idea is to cut and glue together several copies of the input surface $M$ in a way that generates a toric covering space corresponding to a specific choice of $\rho$.
	
	First, we choose $k$ branch points $b_1, \ldots , b_k$ from the set of vertices of $M$ (using farthest point sampling), a degree $d$ and a valid ramification type $\rho$ satisfying Equations (\ref{eq:degree})-(\ref{eq:RH}). Our algorithm then consists of the following steps:
	
	\textbf{Step (i):} We cut the mesh along $k$ disjoint paths, all emanating from the same (arbitrary) vertex $v_0$ in $M$ and ending at the branch points $b_i$ for $i\in [k]$. Figure \ref{fig:gluing} shows this for $k=d=5$. Topologically, $M_{disk}$ is a disk, with all branch points at its boundary. 
	
	\textbf{Step (ii):}
	$M_{disk}$ is then duplicated $d$ times, to form copies $M_{disk}^{\scriptscriptstyle (1)},\ldots, M_{disk}^{\scriptscriptstyle (d)}$. Figure \ref{fig:gluing} shows the $5$ copies with $v_0$ as a white dot and the branch points as colored dots.\\ 
	
	\begin{figure}[h!]
	\includegraphics[width=\columnwidth]{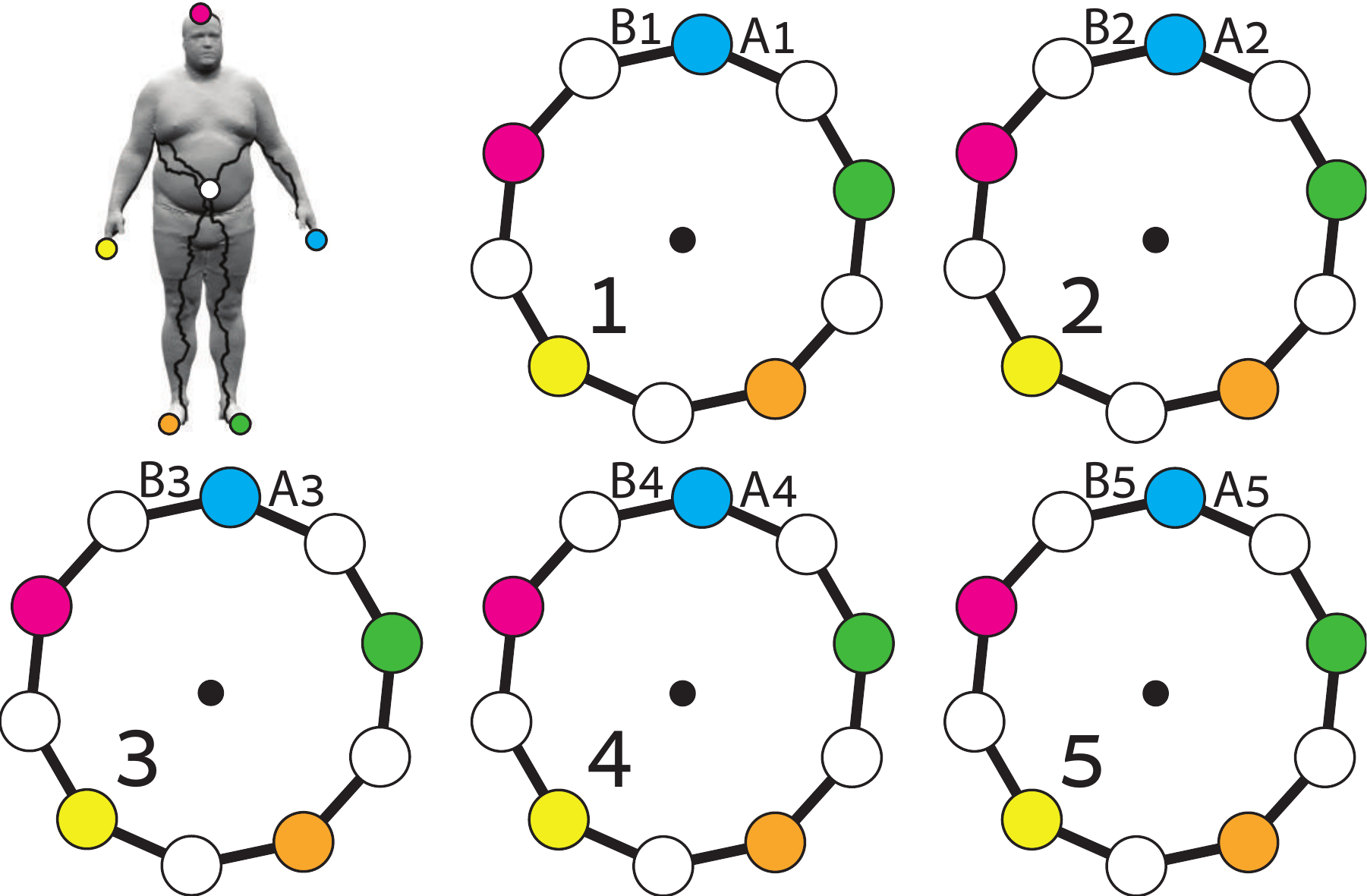}
	\caption{Illustration of the mesh cutting (top-left), and the gluing algorithm. Branch points are visualized as colored dots. }\label{fig:gluing}
	\end{figure}

	\textbf{Step (iii):} 
	We glue the $d$ copies of $M_{disk}$ to create the surface $T$ as follows. Consider a branch point $b_i$; it has $d$ copies located in each of the copies of $M_{disk}$, see \eg, the blue dots in Figure \ref{fig:gluing}. Denote by $B_j$ and $A_j$ the two boundary edges emanating from the $j$-th copy of $b_i$. Note that on the original surface $A_j$ is glued to $B_j$; since every $B_{j'}$ is a duplicate of $B_j$, $A_j$ can be glued to any $B_{j'}$, $j'\in[d]$. Therefore, to describe the gluing of the edges emanating from $b_i$ we use a  permutation $\sigma_i \in S_d$ (a permutation is a bijection $[d]\too [d]$): $A_j$ is glued to $B_{\sigma_i(j)}$. The collection of all permutations (one permutation per branch point)
	\begin{equation}\label{e:gluing_instruction}
	    \Sigma=\set{\sigma_1,\ldots,\sigma_k}
	\end{equation}
	is called the \textbf{gluing instructions}.  Given gluing instructions $\Sigma$ we use it to stitch the boundary of the $d$ copies of $M_{disk}$ to construct the toric surface $T$ (\ie, genus one). The mapping $\Psi:T\to M$ is then defined by: map $v\in T$ to its original version in $M$, and extend linearly in each triangle (\ie, face) of $T$.  $\Psi$ is a well defined branched covering map. The gluing procedure is summarized in Algorithm~\ref{alg:glue}. In Subsection \ref{ss:computing_Sigma} we describe the algorithm for computing the gluing instructions given the desired ramification type $\rho$.

	\begin{algorithm}[h]
		\KwData{cut mesh copies $M_{disk}^{\scriptscriptstyle (j)}$, $j\in [d]$\; \qquad \ \ \   gluing instructions $\Sigma=\set{\sigma_1,\ldots,\sigma_k}$ \\ 
		}
		\KwResult{The torus $T$ and a branched covering map $\Psi :T\to M$ with ramification type $\rho$}
		\For{every branch point $b_i$, $i\in [k]$}{
			\For{every copy $j\in [d]$}{
				stitch $A^{\scriptscriptstyle (i)}_j$ and $B^{\scriptscriptstyle (i)}_{\sigma_i(j)}$
			}
		}
		$\psi :T \to M$ is defined by mapping $v$ to the unique vertex in $M$ that originated $v$. 
		\caption{Gluing algorithm.} \label{alg:glue}
	\end{algorithm}
	
	\vspace{-20pt}	
	\subsubsection{Computing the gluing instructions}\label{ss:computing_Sigma}
  In this paper we limit our attention to ramification types of the form 
	\begin{equation}\label{e:rho_our}
	\rho = \brac{ [1^{d-r}, r ]^k },
	\end{equation}
	where $d$ is the cover degree, $k$ is the number of branch points, and $r$ is the maximal multiplicity of the branch points' pre-images. The motivation in choosing these ramification types is two-fold: First, we want all branch points to be treated equally by the cover. Second, applying higher ramification order improves area distortion of protruding parts (see \eg, \cite{kharevych2006discrete}); See Figure~\ref{fig:E0} and Subsection~\ref{s:example} for an example.
	
	First, let us compute necessary conditions for $\rho$ defined in  \eqref{e:rho_our} to be a feasible ramification type. Equation \eqref{eq:degree} is automatically satisfied since $d-r+r=d$. Plugging $\rho$ in \eqref{eq:RH} we get 
	\begin{equation}\label{eq:rhkind}
	k (r-1) = 2d.
	\end{equation}
	This sets a trade-off between $r$ and $d$: higher values of $r$, while reducing distortion of protruding parts would force higher degree $d$ of the cover, which will produce more copies of $M$ in the image. Practically, we found that $k=5,r=5,d=10$ and $k=6,r=2,d=3$ are both good options that strike a good balance between $r$ and $d$. 
	
    To compute gluing instructions $\Sigma$ we start with $k,r,d$ satisfying \eqref{eq:rhkind}. The next theorem (proved in Section \ref{s:proofs_and_gluing})%
     provides a necessary and sufficient condition for the gluing instructions $\Sigma$ to furnish a cover with ramification type $\rho$:
	\begin{theorem}
		\label{lem:nec}
		A set of gluing instructions $\Sigma=\set{\sigma_1,\ldots,\sigma_k}$ yields a branched covering map with ramification type $\rho$ if and only if the following conditions hold:
		\begin{enumerate}[(i)]
			\item The cycle structure of $\sigma_i$ equals the ramification structure of $b_i$, \ie, $\rho_i=[r_{i,1},\ldots, r_{i,l_i}]$.
			\item $\Sigma$ is a product one tuple. That is, $\sigma_1\cdot \sigma_2\cdots \sigma_k = I_d$.
			\item The group $G$ generated by $\Sigma$ is a transitive subgroup of $S_d$. Namely, for each $i,j\in [d]$ there exists $\sigma \in G$ so that $j=\sigma(i)$.  
		\end{enumerate}
	\end{theorem}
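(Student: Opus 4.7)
The statement is a discrete analogue of the classical Hurwitz classification of branched covers of a surface by tuples of permutations, and the proof can be carried out by a local analysis of $\Psi:T\to M$ at three different kinds of points, followed by a global connectedness argument. Throughout I would identify the $d$ pre-images of any non-branch, non-$v_0$ point $x\in M$ with the copies $M^{(1)}_{disk},\ldots,M^{(d)}_{disk}$ and interpret the gluing data $\Sigma$ as the monodromy of the covering, one loop per branch point.

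\textbf{Step 1 (ramification at the $b_i$'s).} Fix a branch point $b_i$, with its $d$ tentative pre-images $b_i^{(1)},\ldots,b_i^{(d)}$, one in each copy of $M_{disk}$. The gluing stitches the boundary edges $A^{(i)}_j$ and $B^{(i)}_{\sigma_i(j)}$, which, combined with the fixed cyclic ordering of triangles incident to $b_i$ in $M$, identifies the copies $b_i^{(j_1)},\ldots,b_i^{(j_r)}$ into a single vertex of $T$ exactly when $(j_1,\ldots,j_r)$ is a cycle of $\sigma_i$. Around such a vertex, $\Psi$ is locally modelled on $z\mapsto z^r$, so its ramification index is the cycle length. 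Hence the ramification structure of $b_i$ equals the cycle structure of $\sigma_i$, which is precisely condition (i).

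\textbf{Step 2 (regularity at $v_0$).} The vertex $v_0$ is the only other vertex whose copies can be identified by the gluing, since all $k$ cuts emanate from it. A small loop around $v_0$ in $M$ crosses the cuts $c_1,c_2,\ldots,c_k$ in their cyclic order, and lifting this loop to $T$ starting in copy $j$ sends $j\mapsto\sigma_1(j)\mapsto\sigma_2\sigma_1(j)\mapsto\cdots$. Thus the copies of $v_0$ are identified according to the cycles of the product $\sigma_1\sigma_2\cdots\sigma_k$. For $v_0$ to be a regular point of $\Psi$, and therefore not an extra branch point outside the prescribed $\rho$, this product must be the identity, which is condition (ii). At every other point of $M$ one can choose a neighborhood disjoint from the cuts and the branch points, and its pre-image is automatically $d$ disjoint homeomorphic copies, so no further condition is needed.

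\textbf{Step 3 (global connectivity).} Steps 1--2 already show that $\Psi:T\to M$ is a branched covering with the prescribed ramification type $\rho$; what may fail is that $T$ has several connected components. Two copies $M^{(j)}_{disk}$ and $M^{(j')}_{disk}$ lie in the same component of $T$ iff one can be reached from the other by a finite sequence of boundary gluings, iff $j'=\sigma(j)$ for some $\sigma\in G=\langle\Sigma\rangle$. Hence the components of $T$ correspond bijectively to the $G$-orbits on $[d]$, so $T$ is connected (and by the Riemann--Hurwitz formula \eqref{eq:RH} applied with the given $\rho$, a torus) iff $G$ acts transitively on $[d]$, which is condition (iii). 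Combining the three steps gives both directions of the theorem.

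\textbf{Main obstacle.} The subtle step is Step 2: one has to fix a consistent orientation convention on the edges $A^{(i)}_j,B^{(i)}_j$ and a consistent cyclic ordering of the cuts around $v_0$ so that the monodromy around $v_0$ comes out as $\sigma_1\sigma_2\cdots\sigma_k$ rather than some conjugate, inverse or reordering of it. This is essentially a bookkeeping argument, but it is what pins down the ``product one'' condition exactly as stated in (ii); Step 1 requires an analogous but easier convention at each $b_i$.
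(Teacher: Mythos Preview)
Your argument is correct and reaches the same conclusion as the paper, but by a genuinely different route. The paper frames everything through the monodromy representation $\pi_1\big(M\setminus\{b_1,\ldots,b_k\}\big)\to S_d$: condition (ii) is read off from the single relation $l_1*\cdots*l_k=1$ in $\pi_1$, and the converse direction is obtained \emph{indirectly} by invoking Hatcher's construction of the covering space $(U\times[d])/(\pi_1\times H)$, checking it is a connected torus, and then appealing to the uniqueness of covers with a given monodromy to identify it with the output of the gluing algorithm. You instead work entirely inside the glued complex: you analyze the link of each vertex type ($b_i$, $v_0$, generic) and observe that each of your three steps is already an ``if and only if'', so both directions fall out simultaneously without ever leaving the combinatorial model. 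Your approach is more elementary and self-contained (no appeal to universal covers or classification of covering spaces), while the paper's approach situates the result within the classical Riemann existence theorem and makes the role of $\pi_1$ explicit. One small point: in Step~2 your phrase ``a neighborhood disjoint from the cuts'' does not cover interior points of the cut paths themselves; you should add a sentence noting that for such a point the gluing $A_j\leftrightarrow B_{\sigma_i(j)}$ pairs boundary edges bijectively, so the $d$ pre-images remain distinct and unramified. With that addition, and with the orientation bookkeeping you already flag as the main obstacle, your proof is complete.
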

	
	Theorem \ref{lem:nec} indicates that we should search for permutations $\sigma_i$ with prescribed cycle structures. That is, the permutations $\sigma_i$, if exist, are in some prescribed conjugacy classes of the permutation group. Algorithm \ref{alg:tupleFinder} performs such a search, more or less exhaustively, using conditions (ii) and (iii) to prune options that cannot lead to a solution $\Sigma$. 
	
	Since theoretically not all $k,r,d$ satisfying Equation \eqref{eq:rhkind} have a corresponding covering map, Algorithm \ref{alg:tupleFinder} can terminate without finding gluing instructions. In this case, according to Theorem \ref{lem:nec} we know that there is no covering map with ramification type $\rho$. Nevertheless, it is rare to find such examples in practice and indeed we did not encounter such a case in our experiments. Table \ref{tab:product_ones} contains the results of Algorithm \ref{alg:tupleFinder} for any permissible $k,r,d$ with $k\le 6, d\le 10$ so that they can be used as input to Algorithm \ref{alg:glue}.

	\subsection{Flattening the toric surface }
	
	
	The last part of our covering map computation is the computation of the map $\Phi:I\too T$. Equivalently, we compute $\Phi^{-1}$.  To that end we use a version of the Orbifold-Tutte embedding \cite{aigerman2015orbifold}. We first cut $T$ along the two generating loops of the torus (using \cite{jin2018conformal}, Algorithm 5) to get a disk-type surface $T_{disk}$. Second, we compute a bijective piecewise affine map $\Phi^{-1}:T\too I$ by solving a sparse linear system of equations $Ax=b$, where $A\in \Real^{m \times m}$ and $x, b\in \Real ^{m\times 2 } $, and $m$ is the number of vertices in the disk-like mesh $T_{disk}$. This system is a discrete version of the Poisson equation \cite{lovasz2004discrete}, see Section \ref{s:orbitutte} 
	for details on how to construct $A,b$. We use $x$ to map the vertices of $T$ to $I$ and extend linearly to get the piecewise affine map $\Phi^{-1}$.
	
		The resulting map is discrete harmonic \cite{lovasz2004discrete}, approximately conformal up to a linear transformation, and as proven in \cite{aigerman2015orbifold}, a bijection.


	\subsection{Example}
	\label{s:example}
	Figure \ref{fig:E0}a depicts the case $k=5$, $r=3$, $d=5$. Thus $\rho =  \brac{[1,1,3]^5}$; every branch point $b_i$ has three distinct pre-images, where two have ramification one, and one with order-$3$ ramification. The gluing instructions in this case, computed using Algorithm \ref{alg:tupleFinder}, are:
	\begin{align*}
	    \Sigma = \{ &(1)(2)(345), (1)(4)(235),  (3)(4)(152), \\
	                &(3)(4)(125), (1)(5)(243) \} \ .
	\end{align*}
	Note that each of these permutations has a cycle structure $[1,1,3]$ as required in Theorem \ref{lem:nec} \textit{(i)}; conditions \textit{(ii)-(iii)} can be checked as well. These gluing instructions were used to glue the $5$ copies of $M_{disk}$ (as shown in Figure \ref{fig:gluing} and described in \Algref{alg:glue}) to generate the representation $E: I \to M$ shown in Figure \ref{fig:E0}a.
	
	\begin{figure}[t]
		\begin{center}
			\includegraphics[keepaspectratio, height=6cm, width=\linewidth ]{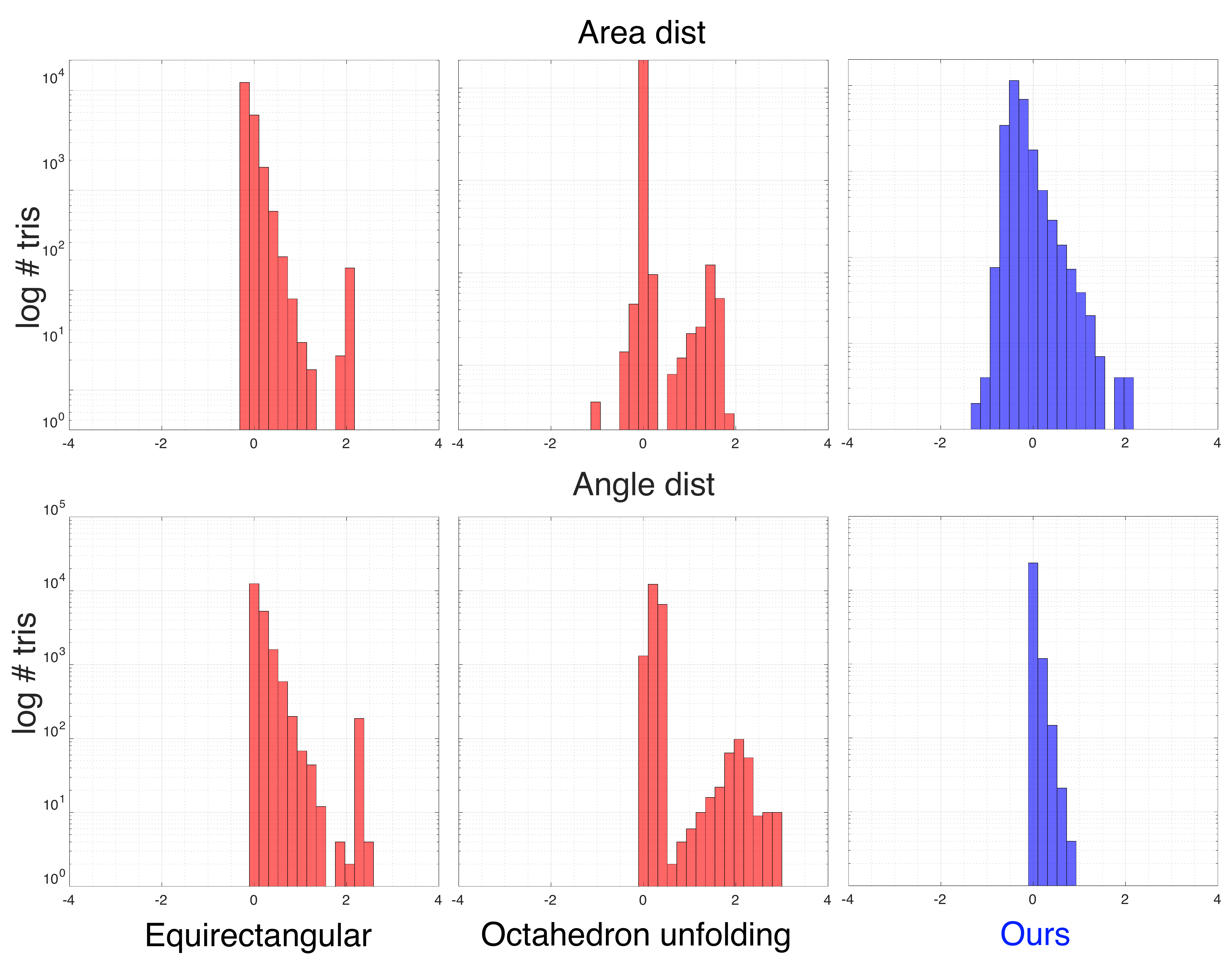}
		\end{center}
		\caption{Area and scale distortion (log scale) of our method (right, in  blue), equirectangular parametrization \cite{su2017learning} and parametrization by octahedron unfolding \cite{praun2003spherical}.	\vspace{-10pt}	}
		\label{fig:area_scale_sphere}
	\end{figure}

	\begin{table*}[t]
		\centering
		\caption{Comparison of our method and the top results in each category of the SHREC17 shape retrieval task.}
		\begin{tabular}{llllll}
			\multicolumn{1}{l}{Method} & \multicolumn{1}{l}{P@N} & \multicolumn{1}{l}{{R@N}} & \multicolumn{1}{l}{{F1@N}} & \multicolumn{1}{l}{mAP} & \multicolumn{1}{l}{NDCG} \\
			\midrule
			FURUYA\_DLAN & \textbf{0.814} & 0.683 & 0.706 & 0.656 & 0.754 \\
			Tatsuma\_ReVGG & 0.705 & \textbf{0.769} & 0.719 & 0.696 & 0.783 \\
			SHREC16-Bai\_GIFT & 0.678 & 0.667 & 0.661 & 0.607 & 0.735 \\
			Deng\_CM-VGG-6DB & 0.412 & 0.706 & 0.472 & 0.524 & 0.642 \\
			Spherical CNN \cite{cohen2018spherical} & 0.701 & 0.711 & 0.699 & 0.676 & 0.756 \\
			SO(3) Equivariant CNNs \cite{esteves2018learning}& 0.717 & 0.737 & - & 0.685 & -\\
			\midrule
			Ours  &0.749 ($2^{nd}$)       & 0.741 ($2^{nd}$)      &\textbf{0.734}      &\textbf{0.709}     &\textbf{0.794} \\
			\bottomrule
		\end{tabular}%
		\label{tab:SHREC_17l}
	\end{table*}%

\vspace{-4pt}
	\section{Experiments}\label{s:experiments}
	To evaluate the efficacy of our method we tested it in two main scenarios: learning signals on the sphere, and learning sphere-type surface data. 
	
	\vspace{-5pt}
	\subsection{Evaluation}
    In this section we compare the geometric properties of our representation to standard or existing techniques. Figure~\ref{fig:area_scale_sphere} shows the area and scale distortion of our method (right, in blue) and two other popular methods for sphere flattening:  Equirectangular projection (see \eg, \cite{su2017learning}) and octahedron unfolding projection, see \cite{praun2003spherical}.
    Area distortion is computed as the determinant of the differential of the cover map $E$ (treated as affine over each triangle of $M$), and angle distortion is the condition number of the differential. Since our image representation contains several copies of each triangle of $M$ we use the least distorted one for the histogram, as we want each part of the surface to appear in the image at-least once with low distortion. As can be seen in Figure~\ref{fig:area_scale_sphere}, our projection has better angle preservation with only a mild sacrifice to area distortion.

	In Figure \ref{fig:area_scale} we repeat this experiment with a sphere-type model of a human and compare the area and angle distortion of five different types of image representations. While the method of \cite{sinha2016deep} (leftmost, in red) preserves area better, it suffers from significant angle distortion. The orbifold covering of \cite{maron2017convolutional} (second to the left, in red) is angle-preserving, but suffers from notable area shrinking. Our covering maps (green and blue) strike a balance between angle and area preservation. The covering of type $[[1^5, 5]^5]$ (middle, in blue) has the least area distortion and we chose it for the segmentation task (below).
	
	The top row of Figure \ref{fig:area_scale} compares the different image representations by reconstructing the original model. Specifically, for each vertex of the mesh we sampled its $x,y,z$ coordinates directly from the image at the vertex location (we used $512\times 512$ images here).  In our representation, we take the coordinates from the vertex copy with the least area distortion. Note that the image representations of \cite{sinha2016deep} and \cite{maron2017convolutional} do not represent well significant parts of the surface (\eg, the right leg and the head). 

	\begin{figure}[t]
		\begin{center}
			\includegraphics[width=\linewidth]{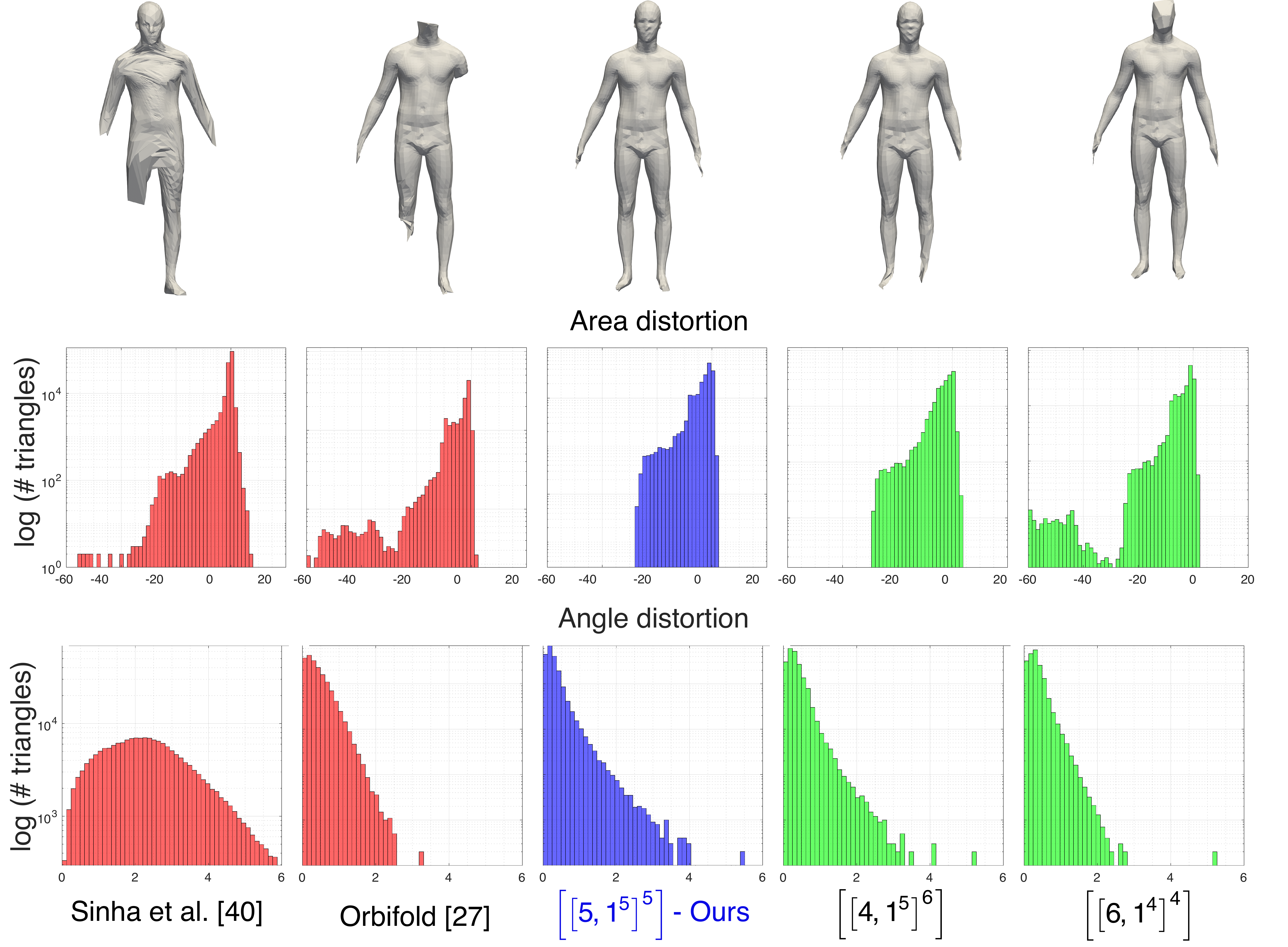}
		\end{center}
		\caption{Top row: meshes reproduced from image representations of several methods (left to right): area preserving method of \cite{sinha2016deep}, orbifold covering of \cite{maron2017convolutional} and three different covering maps produced by our method, of ramification types $[[1^5,5]^5], [[1^5,4]^6]$ and $[[1^4,6]^4]$. Middle and bottom rows: area and angle distortion (log scale). The histograms are taken from ten randomly chosen human models.}
		\label{fig:area_scale} \vspace{-15pt}
	\end{figure}

	\vspace{-5pt}

	\subsection{3D shape retrieval}	\vspace{-5pt}
	The first application of our method is 3D shape retrieval. We use the SHREC2017 benchmark \cite{savvashrec} that contains $51162$ 3D models from $55$ different categories. 
		There are two separate challenges: (i) the shapes are consistently aligned (ii) the shapes are randomly rotated. We tackle the (harder) second challenge.
	
	Since the shapes are not of genus zero we follow the protocol of \cite{cohen2018spherical} that project the meshes on a bounding sphere using ray casting, and record six functions on this sphere: distance to the model, $cos/sin$ of the model angles (this is done for both the model and its convex-hull). We then use our method to transfer these six spherical signals to periodic images (flat torus). See Figure \ref{fig:guit} for an example of such shape representation.    
	
	We compare our method to the top methods in each category, the Spherical CNN method \cite{cohen2018spherical}, and the recent SO-3 equivariant networks suggested in \cite{esteves2018learning}. The results are summarized in Table \ref{tab:SHREC_17l}; note that in the F1 measure we score first among all methods.   
	
	For this application we use a slight modification of the inception v3 architecture \cite{szegedy2016rethinking}. We train the network with ADAM optimizer \cite{kingma2014adam} for $100$ epochs with learning rate $0.05$, batch size of $32$, and learning rate decay of $0.995$. Training took $15$ minutes per epoch on a Tesla V100 Nvidia GPU. In evaluation time we average the output of the network on $5$ randomly rotated copies of the query model.  
		\vspace{-3pt}
	\subsection{Surface classification}	\vspace{-3.5pt}
		\begin{figure}[t]
		\includegraphics[width=\linewidth, scale =0.3]{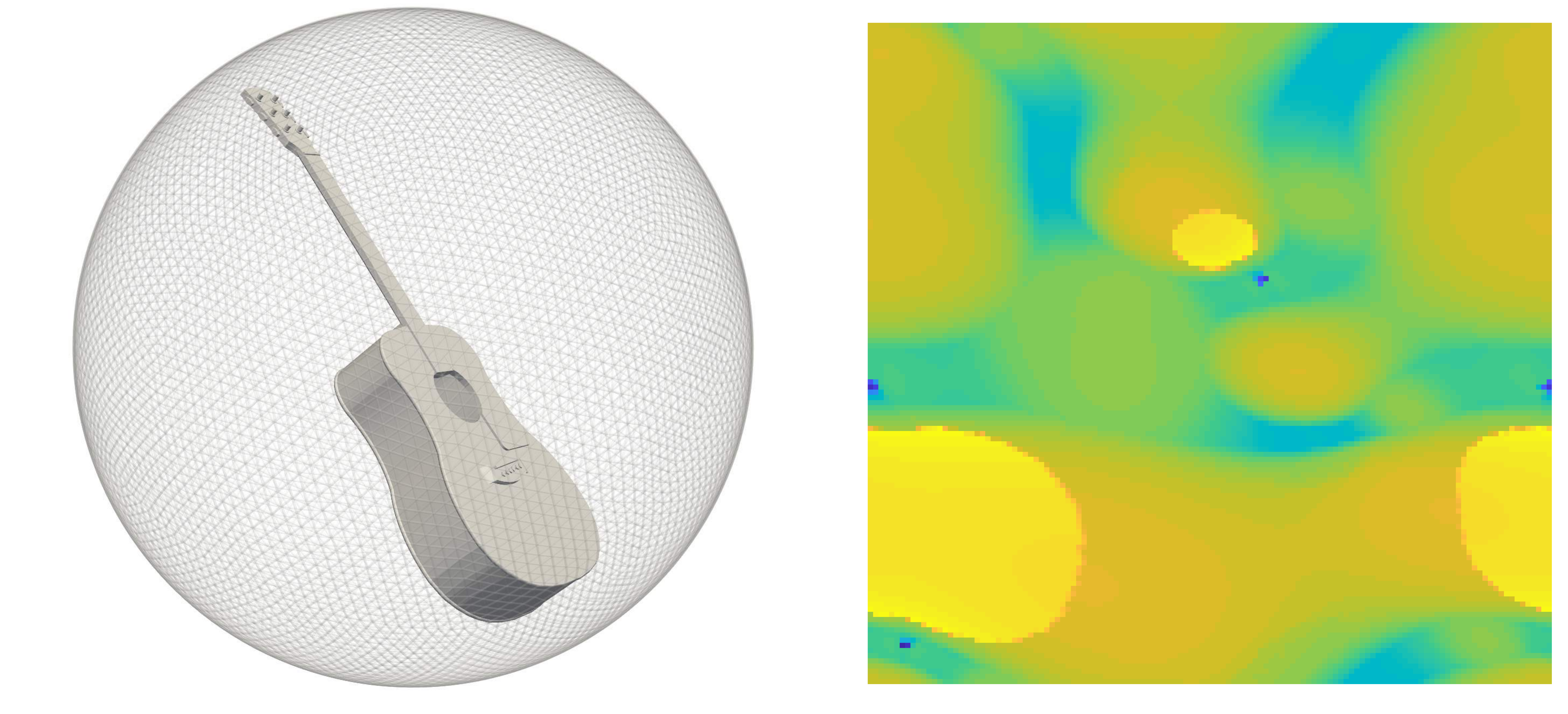}
		\caption{Left: A guitar mesh inside an encapsulating sphere. Right: The image representation of a spherical signal for this mesh. The signal is generated as follows: for each point on the sphere we cast a ray towards the origin and record the ray length at the intersection with the mesh. \vspace{-16pt}}
		\label{fig:guit}
	\end{figure}
	We apply our method to the ModelNet40 surface classification benchmark  \cite{Wu_2015_CVPR} that contains $12311$ 3D models from 40 different categories.
		As in the shape retrieval task, we follow the protocol of \cite{cohen2018spherical} to generate input signals on a sphere. We then use our method to represent the spherical signals as periodic images and apply the same inception v3 model as in the shape retrieval task.  We present peak performance results (following \cite{jiang2019spherical}) for two scenarios that are popular in the literature: (i) the shapes are rotated randomly about the $z$ axis; and (ii) the shapes are learned in their original orientation. We train the network with ADAM optimizer \cite{kingma2014adam} for $100$ epochs for scenario (1) and $300$ for scenario (ii) with learning rate $0.0005$, batch size $16$, and learning rate decay $0.995$. Training took $19$ minutes per epoch for the first scenario (that contains $10$ rotation augmentations) and $3$ minutes per epoch for the second scenario on a Tesla V100 Nvidia GPU.
		
	Table \ref{tab:cls_res} compares our results with several recent methods including the baselines of equirectangular projection (\eg, \cite{su2017learning}) and octahedron unfolding projection \cite{praun2003spherical}. Our results are the best among all spherical learning methods.
	
	\begin{table}[t]
	\centering
	\caption{Results on ModelNet40 dataset.}
	\begin{tabular}{llr}
		Method & Inputs & \multicolumn{1}{l}{Accuracy}\\
		\midrule
		Learning Gims \cite{sinha2016deep}& mesh & 83.9\%\\
		3DShapeNets \cite{Wu_2015_CVPR} & voxels & $84.7\%$ \\
		VoxNet \cite{maturana2015voxnet}& voxels & $85.9\%$ \\
		Pointnet\cite{qi2017pointnet}& points & $89.2\%$ \\
		Pointnet++ \cite{qi2017pointnet++}& points&$91.9\%$ \\
		Dynamic graph CNN \cite{wang2018dynamic}& points & $92.2\%$ \\
		PCNN \cite{atzmon2018point} & points& $92.3\%$ \\ 
		\midrule
		Spherical CNN \cite{cohen2018spherical}& spherical  & $85.0\%$ \\
		SO(3) Equivariant CNNs \cite{esteves2018learning} &	spherical& $88.9\%$ \\
		Spherical on unstructured grid \cite{jiang2019spherical}&	spherical	& $90.5\%$ \\
		Octahedron unfolding (rot $z$) & spherical & $90.2\%$ \\
		Equirectangular projection (rot $z$) & spherical & $90.1\%$ \\		
		\textbf{Ours}& \textbf{spherical} & $\mathbf{91.6\%}$ \\
		\textbf{Ours (rot $z$)}& \textbf{spherical} & $\mathbf{91.0\%}$ \\
		\bottomrule
	\end{tabular}%
	\label{tab:cls_res}
	\end{table}%
	
	\vspace{-5pt}
	\subsection{Surface segmentation} 	\vspace{-3pt} While our first two application targeted spherical signals, our last applications learns signals defined on general sphere-type human models. In particular, we perform human model semantic segmentation. We use the benchmark from \cite{maron2017convolutional} that consists of 373 train models from multiple sources and 18 test models. $5\%$ randomly sampled train models were used as a validation set (18 models). All models are given as triangular meshes. For each model, each face is labeled according to a predefined partition of the human body (\eg, head, torso, hands, total of $8$ labels). The task is to label the triangles of a new unseen human model with these labels. For each model we generate an augmented set of $120$ images per mesh, by permuting the order of the branch points, multiplying the vertices by a random orthogonal matrix and a uniform scale sampled from $[0.85,1.15]$ as suggested in \cite{poulenard2018multi}, and small periodic image translations of $\pm 15$ pixels. In evaluation, as the toric image contains $d$ values for each triangle on the original mesh, we use the label of the triangle with the largest area. Furthermore, we use 10 random augmentations of test images and label each mesh face using a majority vote. 
	Table \ref{tab:seg_res} summarizes the results of this experiment, where our method outperformed previous methods; Figure \ref{fig:segs} shows typical segmentation results.
	
	For this application we used the U-net architecture \cite{ronneberger2015u} with $16$ layers (see Table \ref{tab:unet_arch} for details).  We used a weighted loss with equal probability labels, and trained the network using stochastic gradient descent with momentum \cite{sutskever2013importance} for $50$ epochs with learning rate $0.2$, batch size $2$, and learning rate decay of $0.995$. Training takes $\sim 3$ hours per epoch on a Tesla V100 Nvidia GPU.
	\begin{figure}[t]
		\includegraphics[width=\linewidth]{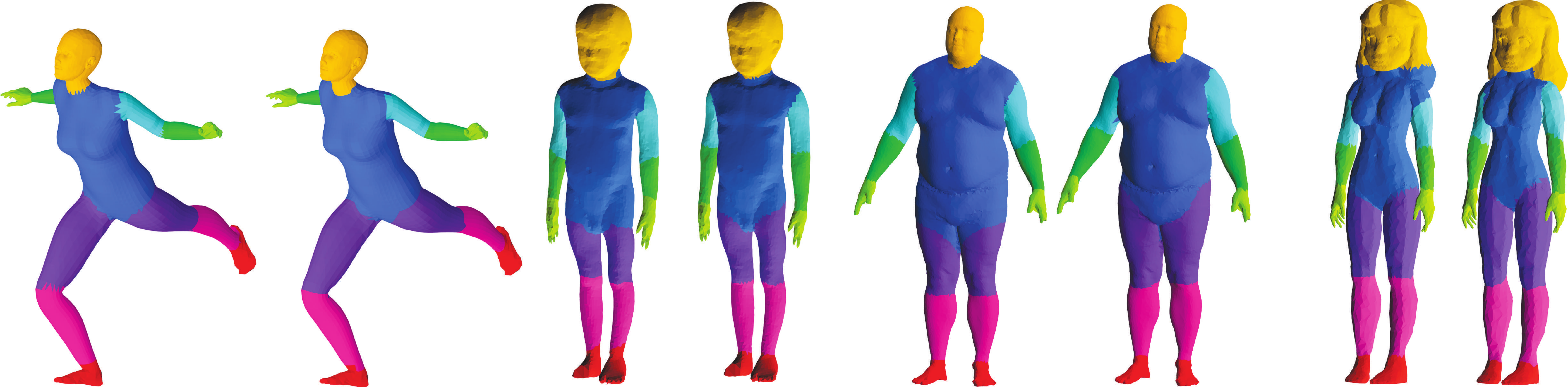}
		\caption{Segmentation results of our method. For each pair: left is our result and right is ground truth. \vspace{-5pt}}\label{fig:segs}
	\end{figure}
	
	\begin{table}[t]
		\centering
		\caption{Results on the human segmentation dataset. }
		\begin{tabular}{llr}
			Method & Inputs & \multicolumn{1}{l}{Accuracy} \\
			\midrule
			Toric CNN \cite{maron2017convolutional}& WKS,AGD,curv & $88.00\%$ \\
			Geodesic Conv \cite{masci2015geodesic} & 3D coords & $76.49\%$ \\
			Pointnet++ \cite{qi2017pointnet++}& 3D coords & $90.77\%$ \\
			Dynamic graph CNN \cite{wang2018dynamic}& 3D coords & $89.72\%$ \\
			Multi-directional Conv \cite{poulenard2018multi}& 3D coords & $88.61\%$ \\
			Learning Gims \cite{sinha2016deep}	&	3D coords & $84.53\%$ \\
			\textbf{Ours}& \textbf{3D coords} & $\mathbf{91.31\%}$ \\
			\bottomrule
		\end{tabular}%
		\label{tab:seg_res}
	\end{table}%

	\vspace{-10pt}
	\section{Conclusions}\vspace{-5pt}
	In this paper, we introduce a new method for representing sphere-type surfaces as toric images that can be used in standard Convolutional Neural Network frameworks for shape learning tasks. The method allows faithful representation of \emph{all parts of the surface in a single image}, thus alleviating the need to generate multiple maps to cover each surface. Our method is general and can target both spherical signal learning tasks as well as more general learning tasks that involve signals on different genus zero surfaces.  Practically, we showed that off-the-shelf CNN models applied to images generated with our method lead to state of the art performance in the tasks of shape retrieval, shape classification and surface segmentation. 
	
	The main limitation of this work is its restriction to genus-zero surfaces. This kind of models are abundant, but certainly do not exhaust all 3D models. We would like to seek a generalization of this method to point clouds, depth images and more general topological types. 
	
	
	\vspace{-8pt}
	\section{Acknowledgements} \vspace{-5pt}
	This research was supported in part by the European Research Council (ERC Consolidator Grant, ”LiftMatch” 771136) and the Israel Science Foundation (Grant No. 1830/17).
	
	{\small
		\bibliographystyle{ieee_fullname}
		\bibliography{final3}
	}

\appendix

\begin{figure*}
	    \includegraphics[width=\textwidth]{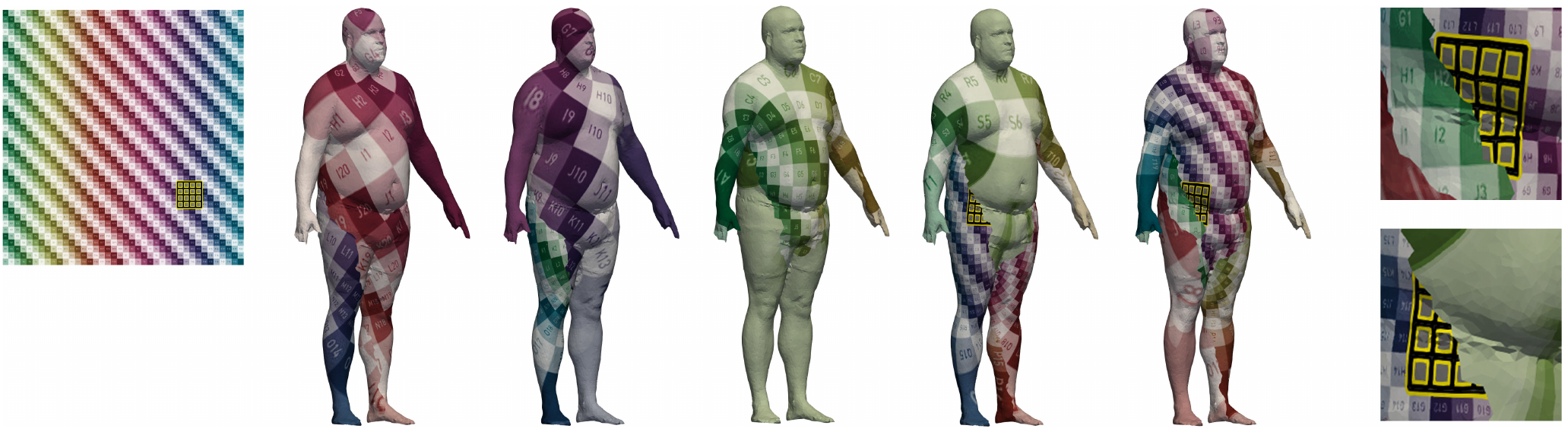}
		\caption{A topological torus (textured image, left) covers a human model $5$ times (middle). This cover is constructed out of 5 copies of the model stitched to form a topological torus (middle). The map from the flat torus (left) to the cover is visualized using a colored checkerboard. Since the 5 copies form a torus, up/down/right/left translations are well-defined everywhere on the cover. The standard image convolution is invariant to these translations over the surface (see right insets where the convolution kernel moves seamlessly between copies of the model).}
			\label{fig:cover}

\end{figure*}

\section{Convolution on a spherical mesh}
In Figure \ref{fig:cover} we depict a cover map from the torus (texture square image on the left) to a human surface (middle); this map covers the human $5$ times. We further show how standard convolution stencil (in yellow) translates to a seamless convolution on the surface. Note that the texture seams on the human models are pretty arbitrary and just indicate when moving to a different copy of the surface.

\section{Guidelines on Choosing parameters}
Adding branch points helps reducing the local distortion in protruding parts, therefore we recommend to choose as many branch points as there are protruding parts common in the dataset (\eg $5$ for humans, $8$ for octopuses etc.).
As we mentioned in section \ref{ss:computing_Sigma} we choose a ramification type of the form $[ [r , 1^{d-r} ] ]$ for each branch point.

As noted in Section~\ref{ss:computing_Sigma}, higher ramification ($r$) also improves area distortion of protruding parts. However, in that case, we are limited by the RH formula (\Eqref{eq:rhkind}). So we would recommend choosing the highest $r$ possible (\eg as appears in Table \ref{tab:product_ones}) and taking $d$ (number of copies) to satisfy \Eqref{eq:rhkind}. Also note that higher $r$ implies higher $d$ (number of copies). Therefore, for a fixed image resolution we would like the highest number of branch points for which all relevant parts are still visible in the image.

\section{Gluing Instructions}
As mentioned in Section \ref{ss:computing_Sigma}, for each choice of number of branch points $k$ , degree $d$  and ramification type $\rho$ satisfying Equations \eqref{e:rho_our} and  \eqref{eq:rhkind} We need to compute a product one tuple of permutations satisfying the conditions of theorem \ref{lem:nec}. We note that this computation can be done in an offline step, before using Algorithm \ref{alg:glue} to compute the toric parameterization. 
In Table \ref{tab:product_ones} We provide gluing instructions corresponding to each valid choice of $k\le 6$, $ d\le 10$ and $\rho$ that complies with Equations  \eqref{e:rho_our} and  \eqref{eq:rhkind}. Each of the gluing instructions in Table \ref{tab:product_ones}
can be used as input to Algorithm \eqref{alg:glue}.
 \begin{table*}[t]
     \centering
	\renewcommand{\arraystretch}{1.5}
     \begin{tabularx}{\textwidth}{lll|l}
     	\toprule
     $k$ & $d$ & $\rho$ & Gluing instructions \\ 
     \midrule
     3 & 3 & $\left[\left[3\right]^{3}\right]$ & $\left(1,2,3\right),\left(1,2,3\right),\left(1,2,3\right)$ \\
     3 & 6 & $\left[\left[1,5\right]^{3}\right]$ & $(1,2,3,4,5),(1,3,4,6,2),(2,6,3,5,4)$ \\
      3 & 9 & $\left[\left[1^{2},7\right]^{3}\right]$ & $(1,2,3,4,5,6,7),(1,7,6,2,3,8,9),(1,9,8,2,5,4,3)$ \\
      4 & 2 & $\left[\left[2\right]^{4}\right]$ & $\left(1,2\right),\left(1,2\right),\left(1,2\right),\left(1,2\right)$ \\
      4 & 4 & $\left[\left[1,3\right]^{4}\right]$ & $(1,2,3),(2,3,4),(1,2,4),(1,2,3)$ \\ 
      4 & 6 & $\left[\left[1^{2},4\right]^{4}\right]$ & $(1,2,3,4),(2,5,4,3),(1,5,6,4),(1,5,4,6)$ \\
      4 & 8 & $\left[\left[1^{3},5\right]^{4}\right]$ & $(1,2,3,4,5),(3,6,8,5,4),(1,5,4,7,2),(2,7,4,8,6)$ \\
      4 & 10 & $\left[\left[1^{4},6\right]^{4}\right]$ & $(1,2,3,4,5,6),(1,7,8,3,5,9),(2,10,8,7,6,5),(1,9,4,3,8,10)$ \\
      5 & 5 & $\left[\left[1^{2},3\right]^{5}\right]$ & $\left(3,4,5\right),\left(2,3,5\right),\left(1,5,2\right),\left(1,2,5\right),\left(2,4,3\right)$ \\
      5 & 10 & $\left[\left[1^{5},5\right]^{5}\right]$ & $\left(6,7,8,9,10\right),\left(1,7,3,4,9\right),\left(1,8,4,3,7\right),\left(2,5,4,7,6\right),\left(2,10,9,4,5\right)$ \\
      6 & 6 & $\left[\left[1^{3},3\right]^{6}\right]$ & $(1,2,3),(2,5,3),(3,6,5),(3,5,6),(1,4,5),(3,5,4)$ \\
      6 & 9 & $\left[\left[1^{5},4\right]^{6}\right]$ & $(1,9,3,5),(1,7,8,4),(3,7,5,6),(4,8,7,9),(1,3,6,2)$ \\
      \bottomrule
     \end{tabularx}
     \caption{Gluing instructions for choices of $k,d,\rho$}
     \label{tab:product_ones}
 \end{table*}

\section{Implementation Details}
\label{sec:implementations}

\paragraph{Learning.} We use Pytorch \cite{paszke2017automatic} for learning. All the experiments are done with toric images generated by our algorithm and off-the-shelf CNN architectures with a single change: we replace the standard zero padding with periodic padding.  

\paragraph{Data generation.} For the surface segmentation task we use a cover of the type ${\rho =\brac{[1^5,5]^5}}$, that is, ${\rho_i=[1^5,5], i\in [5]}$. For the spherical learning tasks (shape retrieval and classification) we use a cover of type ${\rho_i=[1,2], i\in [6]}$. The locations of the branch points are chosen using farthest points sampling. We use the shortest paths from an arbitrary base point to all branch points in order to cut the mesh. When the mesh does not allow such a path we subdivide it locally (without changing its geometry). This pre-processing step is implemented in Matlab. It takes $\sim22$ seconds in average (relatively long running time due to a non-optimized mesh cutting code in Matlab) to generate a periodic (toric) image for a mesh with $6890$ vertices on a single CPU core in an Intel(R) Xeon(R) CPU E5-2670 v3 @ 2.30GHz machine.

\subsection{Segmentation Task}

\paragraph{Prediction.} The network outputs per-pixel labels. In order to obtain a label for each face in the original mesh $M$, we first transfer the per-pixel logits to the faces $F_T$ of the toric mesh using bilinear interpolation sampled at the faces' centers. Since each face $f$ in $M$ has $d$ duplicated faces in the toric mesh $T$ ($\vert \Psi^{-1}(f) \vert = d$), each face $f$ in $M$ has $d$ sets of logits. We use a weighted average of the $d$ sets of logits, where the weights are the area scales of the faces $\Psi^{-1}(f)$. The label of $f$ is the argmax of this weighted-average of logits. This means that better scaled faces (in the toric mesh) receive more weight when deciding how to label a face in the original mesh $M$.

\paragraph{Architecture.} We use a version of a U-Net \cite{ronneberger2015u}. The feature-channels sizes are given in Table \ref{tab:unet_arch}. After each convolution we use $\mathrm{ReLU}$ with a Batch-Normalization layer \cite{ioffe2015batch}. Each UpSample layer is a nearest-neighbour interpolation with scale-factor 2.

%

\section{Proofs}
\subsection{Riemann-Hurwitz formula}
\label{sec:rh_formula}
Consider a branched covering map $E : T  \to M$ of degree $d$ and $k$ branch points, from a toric mesh $T= (V_T, E_T, F_T)$ to a spherical mesh $M = (V_M, E_M, F_M)$. We prove that the ramification type of $E$ must satisfy the Riemann-Hurwitz formula \eqref{eq:RH-ap}.

\begin{proof}[Proof of Riemann-Hurwitz formula]
	First, we note that the set of branch points $B = \{b_1, \ldots b_k\}$ can always be chosen from $V_M$.
	
	Every node $v\in V_M\setminus B$ has $d$ pre-images in $V_T$. However, a branch point $b_i$ has $l_i < d$ pre-images in $V_T$. Every edge $e \in E_M$ has exactly $d$ pre-images in $E_T$, that is ${|E_T|=d|E_M|}$. Similarly, ${|F_T|=d|F_M|}$.
	
	By computing the Euler characteristic for a toric surface:
	\begin{equation}
	\begin{split}
	0 &= \chi(T) = |V_T| - |E_T| + |F_T|  \\
	& = d \underbrace{(|V_M| - |E_M| + |F_M|)}_{\chi(M)} - \sum_{i=1}^{k}(d-l_i) \\
	& = 2d - \sum_{i=1}^{k}(d-l_i) 
	\label{eq:RHformulation}
	\end{split}
	\end{equation}   
	
	Using \begin{equation}
	\sum_{j=1}^{l_i}r_{i,j}=d    
	\end{equation}
	
	and rewriting we obtain the Riemann-Hurwitz formula (RH), in its version for a map from a toric surface to a spherical surface:
	\begin{equation}
	\sum_{i=1}^{k}\sum_{j=1}^{l_i}(r_{i,j} - 1) = 2d	
	\label{eq:RH-ap}
	\end{equation}
\end{proof}

\subsection{Proof of Theorem $\ref{lem:nec}$}
\label{s:proofs_and_gluing}

We recall the following topological facts. A degree $d$ branched covering map $E:T \to M$ from a torus to a genus $0$ surface induces a group homomorphism, called the monodromy representation, from $\pi_{1} ,$ the fundamental group of 
$M\setminus\left\{ b_{1},\ldots,b_{k}\right\}  $ to $S_d$.

The homomorphism is given as follows: We take each loop $l \in \pi_{1},$ based at a point $p$, and lift it to $T$ starting from a preimage of $p$. This lift has to end at another preimage of $p$. Due to properties of the lifting, this induces a permutation on the preimages of $p$ in $T$, referred to as the \emph{fiber} of $p$. 

The group $\pi_{1}$ has $k$ generators and a single relation. The generators, $l_{1}\ldots,l_{k},$ are the $k$ loops around each of the branch points. The relation is $l_{1}*\ldots* l_{k} = 1$.

Our gluing instructions, $\sigma_{1},\ldots,\sigma_{k} $, will be the images of $l_{1},\ldots,l_{k}$ under the monodromy representation.
We shall now give a proof of Theorem \ref{lem:nec} . Namely, that our algorithm produces a cover $T \to M$ with ramification $\rho$ if and only if the gluing instructions are a tuple of permutations satisfying the conditions of Theorem \ref{lem:nec}.
\begin{proof}[Proof of Theorem \ref{lem:nec}]
	First we prove that the conditions in the theorem are necessary.
	
	For $(i)$, we note that a lift of a loop around a branch point $l_{i}$ with a particular ramification structure induces a permutation with the same cycle structure.\\
	For $(ii)$, the fact that $l_{1}*\ldots *l_{k} = 1,$ implies (using group homomorphism)  that $\sigma_{1}\cdot\ldots\cdot \sigma_{k} = I_{d}.$ \\
	For $ (iii),$ fix $p_{1} ,p_{2}$ in the fiber of $p.$ Since $ T$ is connected, there exists a path $\gamma$ connecting $p_1$ and $p_2$. The loop $E\circ \gamma$ is a loop starting and ending at $p$ whose lift takes $p_1$ to $p_2$. Thus, the action of group generated by $\Sigma = \set{\sigma_{1},\ldots,\sigma_{n}}$ is transitive.
	
	Conversely, suppose we have a product one tuple $ \sigma_1, \ldots , \sigma_{k}$ satisfying the conditions of the theorem and $k$ branch points $b_1, \ldots, b_k$. Then condition (i) allows us to define an action of the group $H:= \left\langle \sigma_{1},\sigma_{2},\ldots,\sigma_{k}\right\rangle $ on $[d]$. Following the construction in \cite{Hatcher:478079} pg 68-70 the space $\nicefrac{U \times [d]}{\pi_{1} \times H } $ is a covering space of $M$, where $U$ is the universal cover of $M$. The transitivity of $H$ implies that this covering space $C$ is connected. Condition $(iii)$ implies by the Riemann-Hurwitz formula that $C$ is topologically a torus.
	
	Let $D$ be the space produced from Algorithm \ref{alg:glue}. Note that the construction in Algorithm \ref{alg:glue} implies that lifting a loop circling each branch point $b_i$ induces the permutation $\sigma_{i}$ on the fiber of a generic point. Thus, the action of $\pi_{1}$ on $D$ coincides with the action of $\pi_{1}$ on $C$. Since every action of $\pi_{1}$ on $[d]$ (up to conjugation) produces a unique (up to homeomorphism) covering space, we deduce that $D$ is homeomorphic to $C$.

\end{proof}

\paragraph{Comment:} The equivalence between branched covering maps and tuples of permutations satisfying the conditions of Theorem \ref{lem:nec} is well known. This equivalence is commonly referred to as Riemann's existence theorem (RET). However, to the best of our knowledge, it was previously not known how to practically construct any given branched covering map (our Algorithm \ref{alg:glue}). 

\section{Gluing Instructions}
We now turn to describing an algorithm that finds tuples of permutations
$\sigma_{1},\ldots,\sigma_{k}\in S_{d},$ corresponding to a prescribed
ramification structure $\rho $, up to simultaneous conjugation (relabeling of the branch points). We call such a tuple a product one tuple. We implement our algorithm using Magma computational algebra system \cite{magmamagma}.

We denote the conjugacy class in $S_{d}$ associated with the cycle structure of $\rho_{i}$ by $C_{i}$. In the algorithm construction we use the following:
\begin{claim}
	\begin{enumerate}
		\item $\left\langle \sigma_{1},\sigma_{2},\ldots,\sigma_{k-1}\right\rangle $
		is a transitive permutation group and $\Pi_{i=1}^{k-1}\sigma_{i} \in C_n ,$ if and only if $\sigma_{1},\sigma_{2},\ldots, \sigma_{k}$, where $\sigma_{k} = \left (\sigma_{1} \sigma_{2} \cdots \sigma_{k-1} \right)^{-1}$ is a transitive product one tuple with $\sigma_k\in C_k$.
		\item The set $\set{\sigma_1,\ldots,\sigma_i}$ can be completed to a transitive product one tuple compatible with a ramification structure $\rho$ if and only if $\set{\sigma_1,\ldots, \sigma_{i-1}, g \sigma_i g^{-1}}$, for any $g\in Z(\sigma_1,\ldots, \sigma_{i-1})$ ($Z$ denotes the centralizer), can be completed to a transitive product one tuple compatible with $\rho$. 
	\end{enumerate}
\end{claim}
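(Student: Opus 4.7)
The plan is to prove the two items of the Claim by direct group-theoretic manipulation, using only that conjugacy classes in $S_d$ are closed under inversion and that centralizers are subgroups.

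For item (1), I would first observe that the definition $\sigma_k := (\sigma_1 \cdots \sigma_{k-1})^{-1}$ makes the product-one relation $\sigma_1 \cdots \sigma_k = I_d$ tautological, and that $\sigma_k$ already lies in $\langle \sigma_1,\ldots,\sigma_{k-1}\rangle$, hence
\[
\langle \sigma_1,\ldots,\sigma_k\rangle \;=\; \langle \sigma_1,\ldots,\sigma_{k-1}\rangle.
\]
So transitivity of the $(k-1)$-subgroup is equivalent to transitivity of the full $k$-tuple. The remaining assertion is that $\sigma_1\cdots\sigma_{k-1} \in C_k$ if and only if $\sigma_k \in C_k$, which follows because every permutation and its inverse share the same cycle type, so the conjugacy class $C_k$ is closed under inversion. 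This handles both directions of item (1) at once.

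For item (2), the key idea is a simultaneous conjugation applied only to the tail of a completion. Suppose $\sigma_1,\ldots,\sigma_i,\sigma_{i+1},\ldots,\sigma_k$ is a transitive product-one completion of $\sigma_1,\ldots,\sigma_i$ compatible with $\rho$, and fix $g \in Z(\sigma_1,\ldots,\sigma_{i-1})$. Set $\tau_j := g \sigma_j g^{-1}$ for $i \le j \le k$. Each $\tau_j$ has the same cycle type as $\sigma_j$, so the ramification type $\rho$ is preserved. Using that $g$ commutes with $\sigma_1,\ldots,\sigma_{i-1}$, the product telescopes:
\[
\sigma_1\cdots\sigma_{i-1}\cdot \tau_i\cdots \tau_k \;=\; g\bigl(\sigma_1\cdots\sigma_k\bigr)g^{-1} \;=\; I_d.
\]
Similarly, rewriting $\sigma_j = g\sigma_j g^{-1}$ for $j<i$ gives
\[
\langle \sigma_1,\ldots,\sigma_{i-1},\tau_i,\ldots,\tau_k\rangle \;=\; g\,\langle \sigma_1,\ldots,\sigma_k\rangle\, g^{-1},
\]
and transitivity is invariant under conjugation. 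Hence $\sigma_1,\ldots,\sigma_{i-1},\, g\sigma_i g^{-1}$ admits a transitive product-one completion compatible with $\rho$, namely $\sigma_1,\ldots,\sigma_{i-1},\tau_i,\ldots,\tau_k$. The converse follows by running the same argument with $g$ replaced by $g^{-1}$, which still lies in the centralizer since $Z(\sigma_1,\ldots,\sigma_{i-1})$ is a group.

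I do not anticipate any real obstacle. The only subtlety, which is worth stating explicitly in the write-up, is that in item (2) the centralizer hypothesis is invoked for the prefix $\sigma_1,\ldots,\sigma_{i-1}$ rather than for $\sigma_i$ itself: it is precisely this hypothesis that lets $g$ slide past the first $i-1$ factors in the telescoped product and lets us rewrite the prefix generators as $g\sigma_j g^{-1}$ when identifying the conjugated group. Beyond this single bookkeeping point, both parts of the Claim reduce to elementary symbol pushing in $S_d$.
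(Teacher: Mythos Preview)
Your proposal is correct and follows essentially the same approach as the paper's own proof: for (1) you use that $\sigma_k$ lies in the group generated by the first $k-1$ permutations (so the two generated groups coincide) together with invariance of cycle type under inversion, and for (2) you use that simultaneous conjugation by $g\in Z(\sigma_1,\ldots,\sigma_{i-1})$ fixes the prefix and preserves product-one, transitivity, and cycle types. Your write-up is in fact more explicit than the paper's terse version, which only sketches these points; in particular your observation that the two generated groups are \emph{equal} cleanly handles both directions of (1), whereas the paper only states the one-directional fact that adding generators preserves transitivity.
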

\begin{proof}
	(1) follows from the observations that adding elements to a transitive generator set keeps the set transitive, and that for $g \in S_d$ the cycle structure of $g$ and $g^{-1}$ are the same. For (2), note that for any $g\in Z\left(\sigma_{1},\ldots,\sigma_{i-1}\right)$ and $j\in\left[i-1\right]$ it holds that $g\sigma_{j}g^{-1}=\sigma_{j}$. Thus,  for any $g\in Z\left(\sigma_{1},\ldots,\sigma_{i-1}\right)$, we have that any tuple with $\sigma_{1}, \ldots, \sigma_i$ is the same as a tuple with $g\sigma_{1}g^{-1},\ldots ,g\sigma_{i}g^{-1}$,  up to simultaneous conjugation.
\end{proof}

The main idea in the algorithm for finding all gluing instructions corresponding to a ramification type $\rho$ is to exhaustively go over all tuples $\sigma_{i} \in C_{i}$ and check whether they form a product one tuple. We use the claim above to prune this exhaustive search, as described in Algorithm \ref{alg:tupleFinder}. Note that this computation is done once for a given cover ramification type and is reused for all models using this type of cover.


\begin{algorithm}[h!]
	
	\KwData{ a ramification structure $\rho=(\rho_1,\ldots,\rho_k)$ }
	\KwResult{all gluing instructions $\Sigma$ compatible with $\rho$ }
	Pick $ \sigma_1\in C_1$ \\
	Call the recursive function \textbf{tuplesFinder}($\rho$,$\sigma_1$) \\ \vspace{10pt}
	\textbf{tuplesFinder}($\rho$, $\set{\sigma_1,\ldots,\sigma_i}$)\\ 
	\While{ $C_{i} \ne \emptyset $}{
		pick $\sigma _i\in C_i$ \\
		update $ C_i = C_i \setminus Z(\sigma_{1},\sigma_{2},\ldots,\sigma_{i-1})  \sigma_i Z(\sigma_{1},\sigma_{2},\ldots,\sigma_{i-1}) ^{-1} $ \\
		\eIf  {$i = n-1$}{
			\If {$\Pi_{i=1}^{n-1} \sigma_{i}$ and $\sigma_n$ are conjugates and $\left\langle \sigma_{1},\sigma_{2},\ldots,\sigma_{n-1}\right\rangle$ is transitive }{
				add $\sigma_{1},\sigma_{2},\ldots,\sigma_{k}$ to list of product one tuples
			}
		}{
			call \textbf{tuplesFinder}($\rho$, $\set{\sigma_{1},\sigma_{2},\ldots,\sigma_{i}}$ )
		}		
	}
	\caption{Finding gluing instructions.}  \label{alg:tupleFinder}
\end{algorithm}

\section{Orbifold-Tutte embedding of $T$}
\label{s:orbitutte}
We compute $x$ by solving a sparse linear system following \cite{aigerman2015orbifold}:
\begin{equation}
Ax= b
\label{eq:linear_system}
\end{equation} Here $A\in \Real^{m \times m}$ and $x, b\in \Real ^{m\times 2 } $, where $m$ is the number of vertices in the disk-like mesh $T_{disk}$. The linear system \eqref{eq:linear_system} is constructed by putting together four sets of linear equations as follows:

First, for all interior vertices we set the discrete harmonic equation: \begin{equation}
\sum_{u\in N_{v}} w_{vu}\left( x_v - x_u\right)=0
\end{equation}
where $N_{v}$ is the set of vertices in $V_{T_{disk}}$ adjacent to $v$ and $w_{uv}$ are the cotangent weights \cite{pinkall1993computing}.

Let $L_1$ and $L_2$ be the generators of the homotopy group of $T$. Denote by $v_0\in V_{T}$ the intersection of the two loops $L_1$ and $L_2$ . In $T_{disk}$, the vertex $v_0$ has four copies $v_1', v_2', v_3', v_4'$. next, we ensure that these four copies are mapped to the four corners of the unit square $[0, 1]^2$. Explicitly,  \begin{equation}
v_1' = [0, 0]^T, v_2' = [1, 0]^T, v_3' = [1, 1]^T, v_4' = [1, 0]^T
\end{equation}
Each vertex $v \in \partial V_{T_{disk}} \setminus{\{v_1', v_2', v_3', v_4'\}}$ has a twin vertex $\tilde{v}$ such that $v$ and $\tilde{v}$  correspond to the same vertex in the uncut mesh $T$. Moreover, each such vertex $v$ has its origin in $V_{T}$ either in $L_1$ or in $L_2$. 

We set the vertices whose origin is in $L_1$ to be different by a constant translation in $[0, 1]^T$ and the vertices whose origin is in $L_2$ to be different by a constant translation in $[1, 0]^T$. Namely:
\begin{equation}
\tilde{v} - v = a 
\end{equation} where $v$ and $\tilde{v}$ are twins, and $a$ is either $[0, 1]^T$  or $[1, 0]^T$, depending on whether the origin of $v$ belongs to $L_1$ or $L_2$.

Finally we set each vertex $v\in \partial V_{T_{disk}}\setminus{\{v_1', v_2', v_3', v_4'\}}$ to be the weighted average of both its neighbors and the translated neighbors of its twin. \begin{equation}
\sum_{u \in N(v)} w_{uv}(x_v-x_u) + \sum_{u \in N(\tilde{v})} w_{u\tilde{v}}(x_{\tilde{v}} -x_u + a) = 0 
\end{equation} with $a$ as before.

\begin{table*}[t]
	\centering
	\caption{Channel sizes of our U-Net architecture for surface segmentation}
	\begin{tabular}{lllll}
		\toprule
		Spatial Dimensions & Layer     & kernel size & \# input channels & \# output channels \\
		\midrule
		512 x 512          & Conv2d    & 5           & 3                 & 128                \\
		& Conv2d    & 3           & 128               & 128                \\
		& MaxPool2d & 2           &                   &                    \\
		256 x 256          & Conv2d    & 3           & 128               & 128                \\
		& Conv2d    & 3           & 128               & 128                \\
		& MaxPool2d & 2           &                   &                    \\
		128 x 128          & Conv2d    & 3           & 128               & 128                \\
		& MaxPool2d & 2           &                   &                    \\
		64 x 64            & Conv2d    & 3           & 128               & 256                \\
		& MaxPool2d & 2           &                   &                    \\
		32 x 32            & Conv2d    & 3           & 256               & 512                \\
		& MaxPool2d & 2           &                   &                    \\
		16 x 16            & Conv2d    & 3           & 512               & 512                \\
		& Conv2d    & 3           & 512               & 512                \\
		& UpSample  &             &                   &                    \\
		32 x 32            & Conv2d    & 3           & 1024              & 256                \\
		& Conv2d    & 3           & 256               & 256                \\
		& UpSample  &             &                   &                    \\
		64 x 64            & Conv2d    & 3           & 512               & 128                \\
		& UpSample  &             &                   &                    \\
		128 x 128          & Conv2d    & 3           & 256               & 128                \\
		& UpSample  &             &                   &                    \\
		256 x 256          & Conv2d    & 3           & 256               & 128                \\
		& UpSample  &             &                   &                    \\
		& Conv2d    & 3           & 256               & 128                \\
		512 x 512          & Conv2d    & 1           & 128               & 8                  \\
		\bottomrule
	\end{tabular}
	\label{tab:unet_arch}
\end{table*}

\end{document}